\newif\ifshortver
\newif\ifasymm
\def\useAmsThmOrIEEE{0}
\theoremstyle{plain}
\newtheorem{thm}{\protect\theoremname}
\theoremstyle{definition}
\newtheorem{defn}[thm]{\protect\definitionname}
\theoremstyle{plain}
\newtheorem{prop}[thm]{\protect\propositionname}
\theoremstyle{plain}
\newtheorem{lem}[thm]{\protect\lemmaname}
\theoremstyle{plain}
\newtheorem{cor}[thm]{\protect\corollaryname}
\theoremstyle{definition}
\newtheorem{example}[thm]{\protect\examplename}
\theoremstyle{definition}
\newtheorem{rem}[thm]{\protect\remarkname}
\newtheorem{thm}{\protect\theoremname}
\newtheorem{defn}[thm]{\protect\definitionname}
\newtheorem{prop}[thm]{\protect\propositionname}
\newtheorem{lem}[thm]{\protect\lemmaname}
\definecolor{lightgray}{rgb}{0.9,0.9,0.9}
\definecolor{lightred}{rgb}{1,0.8,0.8}
\definecolor{lightgreen}{rgb}{0.6,1,0.6}
\definecolor{lightyellow}{rgb}{1,1,0.5}
\definecolor{lightgrey}{rgb}{0.8,0.8,0.8}
\providecommand{\corollaryname}{Corollary}
\providecommand{\definitionname}{Definition}
\providecommand{\propositionname}{Proposition}
\providecommand{\theoremname}{Theorem}
\providecommand{\lemmaname}{Lemma}
\providecommand{\remarkname}{Remark}
\title{Communication-Efficient and Privacy-Adaptable Mechanism for Federated Learning}
\author{%
  \IEEEauthorblockN{Chih Wei Ling$^{1,2,a}$, Chun Hei Michael Shiu$^{3,d}$, Youqi Wu$^{4,e}$, Jiande Sun$^{6,g}$, Cheuk Ting Li$^{5,f}$  \\ Linqi Song$^{1,2,b}$, Weitao Xu$^{1,2,c}$\\}
  \IEEEauthorblockA{
  \ifshortver
  $^1$Department of Computer Science,  
  City University of Hong Kong
  $^2$City University of Hong Kong Shenzhen Research Institute \\
  $^3$Department of Electrical and Computer Engineering, University of British Columbia \\
  Department of \{$^4$Computer Science, 
  $^5$Information Engineering\}, The Chinese University of Hong Kong \\
  \else
  $^1$Department of Computer Science,  
  City University of Hong Kong, \\
  $^2$City University of Hong Kong Shenzhen Research Institute \\
  $^3$Department of Electrical and Computer Engineering, University of British Columbia \\
  Department of 
  \{$^4$Computer Science, 
  $^5$Information Engineering\}, The Chinese University of Hong Kong \\
  \fi
  $^6$School of Information Science and Engineering, Shandong Normal University}
  
  Email: \{$^a$cwling6,$^b$linqi.song,$^c$weitaoxu\}@cityu.edu.hk, $^d$shiuchm@student.ubc.ca,
  \{$^e$youqiwu,$^f$ctli\}@link.cuhk.edu.hk, $^g$jiandesun@hotmail.com 
  \thanks{This project was supported in part by the National Natural Science Foundation of China (Grant No. 62371411), National Key R\&D Program of China (Grant No. 2023YFE0208800), the Research Grants Council of the Hong Kong Special Administrative Region, China (Project No. CityU 11202124, CityU 11201422, and GRF CityU 11217823), NSF of Guangdong Province (Project No. 2024A1515010192), the Innovation and Technology Commission of Hong Kong (Project No. MHP/072/23).}
}
\begin{document}

\maketitle

\begin{abstract}
Training machine learning models on decentralized private data via federated learning (FL) poses two key challenges: communication efficiency and privacy protection.
In this work,
we address these challenges within the trusted aggregator model by introducing a novel approach called the Communication-Efficient and Privacy-Adaptable Mechanism (CEPAM), achieving both objectives simultaneously.
In particular, CEPAM leverages the rejection-sampled universal quantizer (RSUQ), a construction of randomized vector quantizer whose resulting  distortion is equivalent to a prescribed noise, such as Gaussian or Laplace noise, enabling joint differential privacy and compression. 
Our CEPAM provides the additional benefit of privacy adaptability, allowing clients and the server to customize privacy protection based on required accuracy and protection.
We theoretically analyze the privacy guarantee of CEPAM and investigate the trade-offs among user privacy and accuracy of CEPAM through experimental evaluations.
Moreover, we assess CEPAM's utility performance using MNIST dataset,
demonstrating that CEPAM surpasses baseline models in terms of learning accuracy.
\end{abstract}

\section{Introduction and Motivation}

Federated learning (FL) \cite{McMahan2016FL} enables the training of machine learning models using vast amounts of data held privately by edge devices by exploiting their computational capabilities \cite{Chen2019DLEdgeCompute}, thanks to its decentralized design \cite{Jeffrey2012distributed}. 
However, FL presents challenges \cite{Tian2019FLChallenges}, such as ensuring efficient communication during training phases and preserving the privacy of the sensitive data at edge devices. 
These challenges are commonly addressed separately through methods that introduce some distortion to updated models, like compression methods \cite{Konecn2016NIPSFederatedLS,lin2020DGCreducing,Corentin2017DLadapCompr,alistarh2017QSGD,shlezinger2020uveqfed} and differential privacy (DP) mechanisms \cite{Dwork06DP,Dwork14Book}.

To address both challenges simultaneously, a common approach involves initially applying a DP mechanism (such as Laplace \cite{Dwork06DP} or Gaussian \cite{Dwork14Book}) followed by quantization \cite{girgis2021shuffled,agarwal2021skellam}.
However, this approach has two drawbacks.
Firstly, the overall error (comprising privacy-preserving and quantization errors) will no longer be exact, 
leading to a degradation in the accuracy of 
model updates and the final trained model.
Secondly, the quantization noise does not contribute to enhancing privacy against the honest but inquisitive clients within the FL framework. 
In essence, the DP-then-quantize approach is suboptimal. 

An alternative approach relies on randomized quantization, such as subtractive dithering 
\cite{roberts1962picture,schuchman1964dither, Limb1969visual, Jayant1972speech, sripad1977necessary}. 
Previous works \cite{Kim2021FederatedLW,shahmiri2024communication,hasircioglu2023communication,yan2023layered,hegazy2024compression} have employed this method to develop a joint privacy and quantization mechanism for FL. 
However, these approaches are limited to scalar quantization, which is less effective than vector quantization in terms of compression ratio. 
Moreover, Lang et al. \cite{lang2023joint} investigated the use of subtractive dithering to generate privacy-preserving noises such as multivariate $t$-distributed  and Laplace noises for local DP. However, their proposed mechanism did not encompass the Gaussian mechanism.\footnote{ The mechanism proposed in \cite{lang2023joint} adds a carefully designed, low-power privacy-preserving noise to the model update before the dithered quantization step. However, the authors do not specify how to design the requisite infinitesimal noise to generate the Gaussian noise for local DP, a task that is highly non-trivial. 
In contrast, our proposed CEPAM specifies how to generate Gaussian noise for central DP.}

Channel simulation \cite{bennett2002entanglement,harsha2010communication,sfrl_trans,havasi2019minimal,shah2022optimal,flamich2023greedy,liu2024universal,Li2024ChannelSim} is technique that can achieve the same goal. 
Recognizing that the Gaussian distribution can be represented as a mixture of uniform distributions \cite{WalkerUniformP1999,CHOY2003exppower}, Agustsson et al.  \cite{agustsson2020universally} combined subtractive dithering with layered construction \cite{wilson2000layered} to simulate 1D Gaussian channel.
This approach was further developed in \cite{hegazy2022randomized} to simulate any 1D unimodal additive noise distribution using randomized scaling and offsets. 
Recently, a new construction in \cite{Ling2024arxivRSUQ} named RSUQ, which is a randomized vector quantization, 
was introduced to exactly simulate any multivariate nonuniform continuous additive noise distribution with finite communication by extending 1D methods to multivariate distributions.
However, \cite{Ling2024arxivRSUQ} did not consider FL setup.

In this work, we propose a joint mechanism, CEPAM, aimed at enhancing privacy while compressing model updates in FL. 
Our proposed CEPAM 
leverages RSUQ \cite{Ling2024arxivRSUQ}, building upon its capability to convert  quantization distortion into an additive noise term with adjustable variance, independent of the quantized data.
Within CEPAM, both clients and the server can tailor the privacy protection mechanism, offering the choice between the exact Gaussian mechanism 
(CEPAM-Gaussian)
or Laplace 
mechanism (CEPAM-Laplace), 
based on the required accuracy level and privacy protection. 
Additionally, the inherent nature of RSUQ as a vector quantizer  gives CEPAM the advantage in reducing the compression ratio compared to its scalar counterparts \cite{hasircioglu2023communication, yan2023layered,agustsson2020universally,wilson2000layered,hegazy2022randomized}.
We theoretically analyze the privacy guarantee of CEPAM
and investigate the trade-offs among user privacy and accuracy of CEPAM through experimental evaluations.
Additionally, we analyze the privacy of CEPAM-Gaussian and CEPAM-Laplace using techniques from privacy amplification \cite{Balle2018PrivAmpl, Feldman2018PrivAmpl}.
We validate our theoretical results by comparing CEPAM-Gaussian against other baselines using the MNIST dataset with MLP and CNN architectures. 
The experimental results demonstrate that for MLP and CNN models, CEPAM-Gaussian achieves improvements of 0.6-2.0\% and 0.4-1.0\% in accuracy, respectively, compared to other baselines.

The rest of this paper is organized as follows. Section~\ref{sec:rel_works} summarizes recent related works and compares them to our approach. Section~\ref{sec:model} reviews necessary preliminaries, presents the system model, and identifies the requirements for joint quantization and privacy in FL. Section~\ref{sec:cepam} details the proposed joint quantization and privacy scheme, and Section~\ref{sec:perf_analysis} provides its theoretical performance analysis. Finally, experimental results are presented in Section~\ref{sec:eval}.

\section{Related Works} \label{sec:rel_works}

The channel simulation problem \cite{bennett2002entanglement, winter2002compression, cuff2013distributed} is closely related to Shannon's lossy compression problem.
In this setup \cite{bennett2002entanglement}, an encoder compresses an input source $X$ into a message $M$, which a decoder then uses to generate an output $M$.
The encoder and decoder may share common randomness.
The objective is to ensure that $Y$ follows a prescribed conditional distribution given $X$.
Channel simulation was initially studied in the asymptotic setting \cite{bennett2002entanglement, winter2002compression, cuff2013distributed}, where $X$ is a sequence, and has also been investigated in the one-shot setting \cite{harsha2010communication,braverman2014public,sfrl_trans,li2021unified}, where $X$ is a scalar.
RSUQ \cite{Ling2024arxivRSUQ} can be regarded as a one-shot channel simulation scheme for simulating an additive noise channel with an arbitrary continuous noise distribution.
For a comprehensive treatment of the subject, we direct the interested reader to the monograph in \cite{Li2024ChannelSim}.

Federated learning (FL) was initially introduced by McMahan et al. \cite{McMahan2016FL} as a collaborative learning approach that allows model training without the need to gather user-specific data.
Research in FL focuses mainly on improving communication efficiency \cite{McMahan2016FL,Konecn2016NIPSFederatedLS,alistarh2017QSGD,wangni2017gradSpar,stich2018sparsifiedSGD,stich2018local,shlezinger2020uveqfed,li2020FedProx,Li2020On}, ensuring data privacy \cite{Bonawitz2016SecureAggre,geyer2018DPFL}, or an approach that addresses both aspects \cite{hasircioglu2023communication,yan2023layered,liu2024universal}.
This work is focused on an approach that addresses the issues of communication efficiency and data privacy simultaneously.

The communication cost in federated learning can be mitigated through two primary methods: reducing the number of communication rounds or compressing model updates (e.g., parameter vectors or stochastic gradients) before transmission. 
While the original FedAvg algorithm \cite{McMahan2016FL} operates heuristically under heterogeneous data (i.e., non-iid data) and device inactivity (i.e., stragglers), Local SGD explicitly reduces communication frequency by performing multiple local update steps before synchronization. However, the convergence guarantees for Local SGD, which were first established by Stich \cite{stich2018local} for strongly-convex and smooth objectives, relied on the idealized assumptions of iid data and no stragglers. 
Li et al. \cite{Li2020On} later extended this analysis by providing theoretical guarantees for Local SGD in settings with heterogeneous data and inactive devices.
Subsequently, Shlezinger et al. \cite{shlezinger2020uveqfed} integrated compression into this framework with UVeQFed, which incorporates quantization for heterogeneous settings.

Another major method to reduce communication costs is the compression of stochastic gradients prior to transmission \cite{alistarh2017QSGD,stich2018sparsifiedSGD,shlezinger2020uveqfed}. This includes methods like quantization, which constrains the number of bits used per value. A key technique is unbiased quantization \cite{alistarh2017QSGD}, which ensures the compressed gradients remain unbiased estimators of their true values. 
In contrast, sparsification techniques \cite{wangni2017gradSpar, stich2018sparsifiedSGD} improve communication efficiency by reducing the number of non-zero entries in the stochastic gradients.
Building on these advances, we propose a novel Local SGD variant that unifies quantization with a privacy-preserving mechanism based on RSUQ, simultaneously addressing communication efficiency and data confidentiality within a single framework.

A recent trend involves developing approaches that jointly address communication efficiency and privacy \cite{Kim2021FederatedLW,shah2022optimal,hasircioglu2023communication,yan2023layered,lang2023joint,hegazy2024compression,liu2024universal,lang2025olala}. 
For central DP, 
\cite{hasircioglu2023communication, yan2023layered} used subtractive dithering to achieve this for scalar entries.
CEPAM generalizes the scalar counterparts in \cite{hasircioglu2023communication,yan2023layered}, as it uses RSUQ, which is a vector quantizer.
For local DP, \cite{lang2023joint} proposed injecting low-power noise before the dithered quantization step, proving guarantees for Laplace distribution and multidimensional $t$-distribution. 
However, the authors do not specify how to generate the requisite infinitesimal noise to produce Gaussian noise. 
In contrast, our CEPAM directly specifies the generation of this Gaussian noise for central DP.
In parallel, techniques from channel simulation have inspired other DP compressors \cite{shah2022optimal, liu2024universal}.

\section{System Model and Preliminaries} \label{sec:model}

In this section, we describe the setup of FL framework. 
We begin by reviewing the conventional FL framework with bit-constrained model updates in Section~\ref{sec:FL}, followed by an exploration of LRSUQ in Section~\ref{sec:RSUQ}, and a discussion of differential privacy 
and privacy amplification 
in Section~\ref{sec:DP}.
Next, we formulate the problem and identify the desired properties used in the FL framework in Section~\ref{sec:problem}.

\subsection*{Notations}

Write $H(X)$ for the entropy and in bits.
Logarithms are to the base $2$.  
For $\mathcal{A},\mathcal{B}\subseteq\mathbb{R}^{n}$,
$\beta\in\mathbb{R}$, $\mathbf{G}\in\mathbb{R}^{n\times n}$, $\mathbf{x}\in\mathbb{R}^{n}$
write $\beta\mathcal{A}:=\{\beta\mathbf{z}:\,\mathbf{z}\in\mathcal{A}\}$,
$\mathbf{G}\mathcal{A}:=\{\mathbf{G}\mathbf{z}:\,\mathbf{z}\in\mathcal{A}\}$,
$\mathcal{A}+\mathbf{x}:=\{\mathbf{z}+\mathbf{x}:\,\mathbf{z}\in\mathcal{A}\}$,
$\mathcal{A}+\mathcal{B}=\{\mathbf{y}+\mathbf{z}:\,\mathbf{y}\in\mathcal{A},\,\mathbf{z}\in\mathcal{B}\}$
for the Minkowski sum, $\mathcal{A}-\mathcal{B}=\{\mathbf{y}-\mathbf{z}:\,\mathbf{y}\in\mathcal{A},\,\mathbf{z}\in\mathcal{B}\}$,
and $\mu(\mathcal{A})$ for the Lebesgue measure of $\mathcal{A}$.
Let $B^{n}:=\{\mathbf{x}\in\mathbb{R}^{n}:\,\Vert\mathbf{x}\Vert\le1\}$ be the unit $n$-ball.
For a function $f:\mathbb{R}^n \rightarrow \mathbb{R}$, its superlevel set is defined as $L_{u}^{+}(f):=\{\mathbf{x} \in \mathbf{R}^n:f(\mathbf{x}) \geq u\}$.

\subsection{Federated Learning (FL)} \label{sec:FL}

In this work, we consider the FL framework 
(or \emph{federated optimization})
proposed in \cite{McMahan2016FL}.
More explicitly, $K$ clients (or devices), each attached with a local dataset $\mathcal{D}^{(k)}$ where $k \in \{1, 2, ..., K\}  =: \mathcal{K}$, cooperate together to train a shared global model $\mathbf{W}$ with $m$ parameters through a central server.
The goal is to minimize the  objective function $F:\mathbb{R}^m \to \mathbb{R}$:
\begin{equation} \label{eq:FLOpt}
    \min_{\mathbf{W} \in \mathbb{R}^m} \Big\{F(\mathbf{W}):= \sum_{k \in \mathcal{K}} p_{k}F_{k}(\mathbf{W})\Big\},
\end{equation}
where $p_k$ is the weight of client $k$ such that $p_k \ge 0$ and $\sum_{k \in \mathcal{K}} p_{k} = 1$.
Suppose that the $k$-th local dataset contains $n_{k}$ training data: $\mathcal{D}^{(k)} = \{\xi_{k,1}, \dots, \xi_{k,n_{k}}\}$. The local objective function $F_k:\mathbb{R}^m \to \mathbb{R}$ is defined by
\begin{equation}
    F_{k}(\mathbf{W}) \equiv F_k(\mathbf{W},\mathcal{D}^{(k)}) := \frac{1}{n_k} \sum_{j=1}^{n_{k}} \ell(\mathbf{W};\xi_{k,j}),
\end{equation}
where $\ell(\cdot;\cdot)$ is an application-specified loss function.

Let $T$ denote the total number of iterations in FL and let $\mathcal{T}_T := \{0, \tau, 2\tau, \ldots, T\}$ denote the set of integer multiples of some positive integer $\tau \in \mathbb{Z}^+$ where $T \equiv 0 \pmod{\tau}$, called the set of \emph{synchronization indices}.
We describe one FL round of the conventional FedAvg  \cite{McMahan2016FL} for solving optimization problem \eqref{eq:FLOpt} as follows. 
Let $\mathbf{W}_t$ denote global parameter vector available at the  server at the time instance $t \in \mathcal{T}_T$.
At the beginning of each FL round, the server broadcasts $\mathbf{W}_t$ to all the clients.
Then, each client $k$ sets $\mathbf{W}_{t}^{k}=\mathbf{W}_{t}$ and computes the $\tau \; (\ge 1)$ local parameter vectors by SGD:\footnote{When $T$ is fixed, the larger $\tau$ is, the fewer the communication rounds there are.}
\begin{equation} \label{eq:local_sgd}
    \mathbf{W}_{t+t'}^{k} \leftarrow \mathbf{W}_{t+t'-1}^{k} - \eta_{t+t'-1} \nabla F_{k}^{j_{t+t'-1}^{k}}(\mathbf{W}_{t+t'-1}^{k}),\;\; t'=1, \ldots, \tau,
\end{equation}
where $\eta_{t+t'-1}$ is the learning rate, $\nabla F_{k}^{j}(\mathbf{W}):=\nabla F_{k}(\mathbf{W};\xi_{k,j})$ is the gradient computed at a single sample of index $j$, and $j_{t}^{k}$ is the sample index chosen uniformly from the local data $\mathcal{D}^{(k)}$ of client $k$ at time $t$.\footnote{In this work, our focus is on analyzing the computation of a single stochastic gradient at each client during every time instance.
The FL convergence rates can potentially be enhanced by incorporating mini-batching technique \cite{stich2018local}, leaving the detailed analysis for future work.}
Finally, by assuming that all clients participate in each FL round for simplicity, the server aggregates the $K$ local model updates $\{\mathbf{X}_{t+\tau}^{k}=\mathbf{W}_{t+\tau}^{k}-\mathbf{W}_t\}_{k\in \mathcal{K}}$ and computes the new global parameter vector:
\begin{equation} \label{eq:global_without_quan}
    \mathbf{W}_{t+\tau} \leftarrow \mathbf{W}_{t} +  \sum_{k \in \mathcal{K}} p_{k}\mathbf{X}_{t+\tau}^{k}= \sum_{k \in \mathcal{K}} p_{k}\mathbf{W}_{t+\tau}^{k}.
\end{equation}

The dataset $\mathcal{D}^{(k)}$ inherently induces a distribution. 
By an abuse of notation, we also denote this induced distribution as $\mathcal{D}^{(k)}$.
Suppose the data in client $k$ is iid sampled from the induced distribution $\mathcal{D}^{(k)}$.
Consequently, the overall distribution becomes a mixture of all local distributions: $\mathcal{D} = \sum_{k \in \mathcal{K}} p_k \mathcal{D}^{(k)}$.
Previous works typically assume that the data is iid generated by or partitioned among the $K$ clients, i.e., for all $k \in \mathcal{K}$, $\mathcal{D}^{(k)} = \mathcal{D}$.
However, we consider a scenario where the data is non-iid (or heterogeneous), implying that $F_k$ could potentially be an arbitrarily poor approximation to $F$.

In \cite{shlezinger2020uveqfed}, Shlezinger et al. introduced a framework for \emph{Quantized Federated Learning}. 
Given the limited bandwidth of uplink channel in FL framework, each $k$-th client is required to communicate a quantized version \cite{gray1998quantization} of the model update $\mathbf{X}_{t+\tau}^k$ using a finite number of bits to the server. 
Specifically, the $k$-th model update $\mathbf{X}_{t+\tau}^k$ is encoded into a binary codeword $u_{t+\tau}^{k}$ of length $R_k$ bits through an encoding function $\mathrm{Enc}_{t+\tau}^{k}:\mathbb{R}^m\to\{0,1,\ldots,2^{R_k}-1\}:=\mathcal{U}_k$. 
Since the outputs $\mathrm{Enc}_{t+\tau}^k(\mathbf{X}_{t+\tau}^k) = u_{t+\tau}^k \in \mathcal{U}_k$ of an encoder may have unequal probabilities, entropy coding \cite{huffman1952method,Golomb1966enc,elias1975universal} can be utilized to further reduce redundancy when transmitting through the binary lossless channel.
In the FL literature, the uplink channel is typically modeled as a bit-constrained link \cite{shlezinger2020uveqfed}, and the transmission is assumed to be lossless.
Upon receiving the set of codewords $\{u_{t+\tau}^{k}\}_{k \in \mathcal{K}}$ from clients, the server uses the joint decoding function $\mathrm{Dec}_{t+\tau}:\mathcal{U}_1 \times \ldots \times \mathcal{U}_K \to \mathbf{R}^m$ to reconstruct  $\hat{\mathbf{X}}_{t+\tau} \in \mathbb{R}^m$, an estimate of the weighted average $\sum_{k \in \mathcal{K}}p_k\mathbf{X}_{t+\tau}^k$.

\subsection{Rejection-Sampled Universal Quantizer (RSUQ)} \label{sec:RSUQ}

We will now review  RSUQ \cite{Ling2024arxivRSUQ}, which is constructed based on the subtractive dithered  quantizer (SDQ) \cite{roberts1962picture,ziv1985universal,zamir1992universal}. 

Given a non-singular generator matrix
$\mathbf{G}\in\mathbb{R}^{n\times n}$,  a \emph{lattice} is the set $\mathbf{G}\mathbb{Z}^{n}=\{\mathbf{G}\mathbf{j}:\,\mathbf{j}\in\mathbb{Z}^{n}\}$. 
A bounded set $\mathcal{P}\subseteq\mathbb{R}^{n}$ is called a \emph{basic
cell} of the lattice $\mathbf{G}\mathbb{Z}^{n}$ if $(\mathcal{P}+\mathbf{G}\mathbf{j})_{\mathbf{j}\in\mathbb{Z}^{n}}$
forms a partition of $\mathbb{R}^{n}$ \cite{conway2013sphere, zamir2014}.
Specifically, the Voronoi
cell $\mathcal{V}:=\{\mathbf{x}\in\mathbb{R}^{n}:\,\arg\min_{\mathbf{j}\in\mathbb{Z}^{n}}\Vert\mathbf{x}-\mathbf{G}\mathbf{j}\Vert=\mathbf{0}\}$ is a basic cell.
Given a basic cell $\mathcal{P}$, we can define a \emph{lattice quantizer} $Q_{\mathcal{P}}:\mathbb{R}^{n}\to\mathbf{G}\mathbb{Z}^{n}$ such that $Q_{\mathcal{P}}(\mathbf{x})=\mathbf{y}$
where $\mathbf{y}\in\mathbf{G}\mathbb{Z}^{n}$ is the unique lattice
point that satisfies $\mathbf{x}\in-\mathcal{P}+\mathbf{y}$. 
The resulting quantization error $\mathbf{z}:=Q_{\mathcal{P}}(\mathbf{x})-\mathbf{x}$ depends deterministically on the input $\mathbf{x}$ and is approximately uniformly distributed over
 the basic cell of the lattice quantizer under some regularity assumptions. 
Therefore, it is often combined with probabilistic methods such as random dithering to construct SDQ  \cite{ziv1985universal,zamir1992universal}.

\begin{defn}
Given a basic cell $\mathcal{P}$ and a random dither $\mathbf{V}\sim\mathrm{Unif}(\mathcal{P})$, a \emph{subtractive dithered
quantizer} (SDQ) $Q_{\mathcal{P}}^{SDQ}:\mathbb{R}^{n}\times\mathcal{P}\to\mathbb{R}^{n}$ for an input $\mathbf{x} \in \mathbb{R}^n$ is given by  $Q_{\mathcal{P}}^{SDQ}(\mathbf{x},\mathbf{v})=Q_{\mathcal{P}}(\mathbf{x}-\mathbf{v})+\mathbf{v}$, where $Q_{\mathcal{P}}$ is the lattice quantizer. 
\end{defn}

A well-known property of SDQ is that the resulting quantization error is uniformly distributed over the basic cell of the quantizer and is statistically independent of the input signal \cite{schuchman1964dither,zamir1992universal,gray1993dithered,kirac1996results,zamir1996lattice}.

The quantization error incurred from the aforementioned quantization schemes approximately follows a uniform distribution over a basic cell of a lattice. 
It may be desirable to have the quantization error follow a uniform distribution over an arbitrary set, rather than being distributed uniformly over a basic cell. 
RSUQ is a randomized quantizer where the quantization error is uniformly distributed over a set $\mathcal{A}$, a subset of a basic cell. This quantization scheme is based on applying rejection sampling on top of SDQ.
Intuitively, we keep generating new dither signals until the quantization error falls in $\mathcal{A}$.\footnote{It is easy to see that the acceptance probability is $\mu(\mathcal{A})/\mu(\mathcal{P})$.}

\begin{defn} \cite[Definition 4]{Ling2024arxivRSUQ}
\label{def:rej_samp_quant} Given a basic cell $\mathcal{P}$ of the
lattice $\mathbf{G}\mathbb{Z}^{n}$, a subset $\mathcal{A}\subseteq\mathcal{P}$, and a sequence $S=(\mathbf{V}_{i})_{i\in\mathbb{N}^{+}}$, $\mathbf{V}_{1},\mathbf{V}_{2},\ldots\stackrel{iid}{\sim}\mathrm{Unif}(\mathcal{P})$
are i.i.d. dithers, 
the \emph{rejection-sampled universal quantizer} (RSUQ) $Q_{\mathcal{A},\mathcal{P}}:\mathbb{R}^{n}\times \prod_{i \in \mathbb{N}^{+}}\mathcal{P}_i\to\mathbb{R}^{n}$ for $\mathcal{A}$
against $\mathcal{P}$ is given by 
\begin{equation}
Q_{\mathcal{A},\mathcal{P}}(\mathbf{x},(\mathbf{v}_{i})_{i}):=Q_{\mathcal{P}}(\mathbf{x}-\mathbf{v}_{h})+\mathbf{v}_{h},\label{eq:QAP_def}
\end{equation}
where
\begin{equation}
h:=\min\big\{ i:\,Q_{\mathcal{P}}(\mathbf{x}-\mathbf{v}_{i})+\mathbf{v}_{i}-\mathbf{x}\in\mathcal{A}\big\},\label{eq:kstar}
\end{equation}
and $Q_{\mathcal{P}}$ is the lattice quantizer for basic cell
$\mathcal{P}$.
\end{defn}

Note that SDQ is a special case of RSUQ where $\mathcal{A}=\mathcal{P}$. 
It is easy to check that the quantization error is uniform over $\mathcal{A}$ by using the standard rejection sampling argument and the ``crypto'' lemma \cite[Lemma 4.1.1 and Theorem 4.1.1]{zamir2014}. 

\begin{prop} \label{prop:error_dist_unif} \cite[Proposition 5]{Ling2024arxivRSUQ}
For any $\mathbf{x} \in \mathbb{R}^n$, the quantization error $\mathbf{Z} := Q_{\mathcal{A},\mathcal{P}}(\mathbf{x},S) - \mathbf{x}$ of RSUQ $Q_{\mathcal{A},\mathcal{P}}$, where $S \sim P_S$, 
follows the uniform distribution over the set $\mathcal{A}$, i.e., $\mathbf{Z} \sim \mathrm{Unif}(\mathcal{A})$.
\end{prop}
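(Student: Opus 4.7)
The plan is to reduce the statement to a combination of two standard facts: the crypto lemma for subtractive dithering, and the correctness of rejection sampling. First, I would fix an arbitrary $\mathbf{x}\in\mathbb{R}^{n}$ and, for each $i\in\mathbb{N}^{+}$, define the auxiliary random variable $\mathbf{Z}_{i}:=Q_{\mathcal{P}}(\mathbf{x}-\mathbf{V}_{i})+\mathbf{V}_{i}-\mathbf{x}$, which is exactly the SDQ quantization error using dither $\mathbf{V}_{i}$. Since the $\mathbf{V}_{i}$ are i.i.d.\ $\mathrm{Unif}(\mathcal{P})$, the $\mathbf{Z}_{i}$ are also i.i.d., so the only nontrivial content of this first step is the marginal distribution of $\mathbf{Z}_{1}$.

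Next I would invoke the crypto lemma (Lemma 4.1.1 / Theorem 4.1.1 of \cite{zamir2014}), which states that for any deterministic $\mathbf{x}$ and any basic cell $\mathcal{P}$ of a lattice, the quantity $Q_{\mathcal{P}}(\mathbf{x}-\mathbf{V})+\mathbf{V}-\mathbf{x}$ is uniformly distributed over $\mathcal{P}$ and independent of $\mathbf{x}$ whenever $\mathbf{V}\sim\mathrm{Unif}(\mathcal{P})$. This is the well-known property of SDQ already stated in the text just after its definition. Hence $\mathbf{Z}_{i}\stackrel{iid}{\sim}\mathrm{Unif}(\mathcal{P})$, independent of $\mathbf{x}$.

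Then I would translate the definition of $h$ in \eqref{eq:kstar} into the language of rejection sampling: $h=\min\{i:\mathbf{Z}_{i}\in\mathcal{A}\}$, and $\mathbf{Z}=\mathbf{Z}_{h}$. Since $\mathcal{A}\subseteq\mathcal{P}$ and $\mathcal{A}$ has positive Lebesgue measure (otherwise the statement is vacuous), the acceptance probability at each trial is $p:=\mu(\mathcal{A})/\mu(\mathcal{P})\in(0,1]$, so $h<\infty$ almost surely. By the standard rejection-sampling argument, for any measurable $\mathcal{B}\subseteq\mathcal{A}$,
\begin{equation}
\Pr(\mathbf{Z}_{h}\in\mathcal{B})=\sum_{i=1}^{\infty}\Pr(h=i,\,\mathbf{Z}_{i}\in\mathcal{B})=\sum_{i=1}^{\infty}(1-p)^{i-1}\,\frac{\mu(\mathcal{B})}{\mu(\mathcal{P})}=\frac{\mu(\mathcal{B})}{\mu(\mathcal{A})},
\end{equation}
using the i.i.d.\ uniformity of the $\mathbf{Z}_{i}$ from the previous step and $\mathcal{B}\subseteq\mathcal{A}\subseteq\mathcal{P}$. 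This is exactly the statement $\mathbf{Z}\sim\mathrm{Unif}(\mathcal{A})$.

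The main obstacle, if any, is purely bookkeeping: one has to be careful that the crypto lemma gives uniformity over the \emph{same} basic cell $\mathcal{P}$ used in the lattice quantizer $Q_{\mathcal{P}}$ (so that the condition $\mathbf{Z}_{i}\in\mathcal{P}$ is automatic and the rejection event $\mathbf{Z}_{i}\in\mathcal{A}$ is well-posed), and that independence of $\mathbf{x}$ in the crypto lemma lets us treat $\mathbf{x}$ as a fixed parameter when computing the conditional distribution of $\mathbf{Z}_{h}$. Beyond this, the proof is mechanical.
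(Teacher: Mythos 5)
Your proof is correct and follows exactly the route the paper itself indicates (the paper only sketches it, deferring to \cite{Ling2024arxivRSUQ}): apply the crypto lemma to get that each SDQ error $\mathbf{Z}_i$ is i.i.d.\ $\mathrm{Unif}(\mathcal{P})$, then run the standard rejection-sampling computation to conclude $\mathbf{Z}_h\sim\mathrm{Unif}(\mathcal{A})$. The bookkeeping points you flag (that the SDQ error automatically lies in $\mathcal{P}$, and that $\mu(\mathcal{A})>0$ ensures $h<\infty$ a.s.) are handled appropriately.
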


By using layered construction as in \cite{hegazy2022randomized,wilson2000layered}, RSUQ can be generalized to simulate an additive noise channel with noise following a continuous  distribution.
Consider a pdf $f:\mathbb{R}^{n}\to[0,\infty)$ and write its superlevel set as
\[
L_{u}^{+}(f)=\{\mathbf{z}\in\mathbb{R}^{n}:\,f(\mathbf{z})\ge u\}.
\]
Let $f_{U}(u):=\mu(L_{u}^{+}(f))$ for $u>0$, which is also a pdf. 
If we generate $U\sim f_{U}$, and then $\mathbf{Z}|\{U=u\}\sim\mathrm{Unif}(L_{u}^{+}(f))$,
then we have $\mathbf{Z}\sim f$ by the fundamental theorem of simulation \cite{robert2004monte}.

\begin{defn} \cite[Definition 12]{Ling2024arxivRSUQ}
\label{def:rej_samp_quant_layer} Given a basic cell $\mathcal{P}$
of the lattice $\mathbf{G}\mathbb{Z}^{n}$, a probability density
function  $f:\mathbb{R}^{n}\to[0,\infty)$ where $L_{u}^{+}(f)$ is
always bounded for $u>0$, and $\beta:(0,\infty)\to[0,\infty)$ satisfying
$L_{u}^{+}(f)\subseteq\beta(u)\mathcal{P}$ for $u>0$, and a random pair $S=(U,(\mathbf{V}_{i})_{i\in\mathbb{N}^{+}})$ where the latent variable 
$U\sim f_{U}$ with $f_{U}(u):=\mu(L_{u}^{+}(f))$, and $\mathbf{V}_{1},\mathbf{V}_{2},\ldots\stackrel{iid}{\sim}\mathrm{Unif}(\mathcal{P})$
is a sequence of i.i.d. dither signals, the \emph{layered
rejection-sampled universal quantizer} (LRSUQ) $Q_{f,\mathcal{P}}:\mathbb{R}^{n}\times \mathbb{R} \times \prod_{i \in \mathbb{N}^{+}}\mathcal{P}_i\to\mathbb{R}^{n}$ for $f$ against $\mathcal{P}$
is given by 
\begin{equation}
Q_{f,\mathcal{P}}(\mathbf{x},u,(\mathbf{v}_{i})_{i}):=\beta(u)\cdot\big(Q_{\mathcal{P}}(\mathbf{x}/\beta(u)-\mathbf{v}_{h})+\mathbf{v}_{h}\big),
\end{equation}
where
\begin{equation}
h:=\min\big\{ i:\,\beta(u)\cdot\big(Q_{\mathcal{P}}(\mathbf{x}/\beta(u)-\mathbf{v}_{i})+\mathbf{v}_{i}\big)-\mathbf{x}\in L_{u}^{+}(f)\big\},
\end{equation}
and $Q_{\mathcal{P}}$ is the lattice quantizer for basic cell
$\mathcal{P}$.
\end{defn}

It can be shown that LRSUQ indeed gives the desired error distribution by Proposition~\ref{prop:error_dist_unif}.

\begin{prop} \label{prop:LRSUQ_error} 
\cite[Proposition 13]{Ling2024arxivRSUQ}
Consider LRSUQ $Q_{f,\mathcal{P}}$. 
For any random input $\mathbf{X}$, the quantization error
defined by $\mathbf{Z} := Q_{f,\mathcal{P}}(\mathbf{X},U,(\mathbf{V}_{i})_{i}) - \mathbf{X}$, 
follows the pdf $f$, independent of $\mathbf{X}$.
\end{prop}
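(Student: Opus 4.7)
The plan is to reduce LRSUQ to an instance of RSUQ once we condition on the latent variable $U=u$, apply Proposition~\ref{prop:error_dist_unif}, and then marginalize over $U$ via the fundamental theorem of simulation to recover $f$.

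First I would condition on $U=u$ so that the scaling factor $\beta(u)$ becomes deterministic, and define the rescaled input $\mathbf{X}':=\mathbf{X}/\beta(u)$ and the rescaled target set $\mathcal{A}':=L_u^{+}(f)/\beta(u)$. The hypothesis $L_u^{+}(f)\subseteq\beta(u)\mathcal{P}$ gives $\mathcal{A}'\subseteq\mathcal{P}$, so $\mathcal{A}'$ is a legitimate target set for an RSUQ against $\mathcal{P}$. I would then check that the LRSUQ acceptance rule in Definition~\ref{def:rej_samp_quant_layer}, namely $\beta(u)(Q_{\mathcal{P}}(\mathbf{x}/\beta(u)-\mathbf{v}_i)+\mathbf{v}_i)-\mathbf{x}\in L_u^{+}(f)$, is equivalent (by dividing by $\beta(u)$) to $Q_{\mathcal{P}}(\mathbf{x}'-\mathbf{v}_i)+\mathbf{v}_i-\mathbf{x}'\in\mathcal{A}'$, which is exactly the RSUQ rule from \eqref{eq:kstar}. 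Therefore
\begin{equation}
Q_{f,\mathcal{P}}(\mathbf{X},u,(\mathbf{V}_i)_i)-\mathbf{X}=\beta(u)\bigl(Q_{\mathcal{A}',\mathcal{P}}(\mathbf{X}',(\mathbf{V}_i)_i)-\mathbf{X}'\bigr).
\end{equation}

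Next I would invoke Proposition~\ref{prop:error_dist_unif} on the right-hand side: conditional on $U=u$, the RSUQ error $Q_{\mathcal{A}',\mathcal{P}}(\mathbf{X}',(\mathbf{V}_i)_i)-\mathbf{X}'$ is $\mathrm{Unif}(\mathcal{A}')$ independent of $\mathbf{X}'$, hence of $\mathbf{X}$. Rescaling by $\beta(u)$ then gives
\begin{equation}
\mathbf{Z}\mid\{U=u\}\;\sim\;\mathrm{Unif}\bigl(\beta(u)\mathcal{A}'\bigr)=\mathrm{Unif}\bigl(L_u^{+}(f)\bigr),
\end{equation}
independent of $\mathbf{X}$ given $U=u$. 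Since $U$ is drawn independently of $\mathbf{X}$, the pair $(\mathbf{Z},U)$ is independent of $\mathbf{X}$, and in particular $\mathbf{Z}\perp\!\!\!\perp\mathbf{X}$.

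Finally I would compute the marginal of $\mathbf{Z}$ by integrating over $U\sim f_U$ with $f_U(u)=\mu(L_u^{+}(f))$. The joint density is $f_U(u)\cdot\mathbf{1}\{\mathbf{z}\in L_u^{+}(f)\}/\mu(L_u^{+}(f))=\mathbf{1}\{f(\mathbf{z})\ge u\}$, so by the layer-cake / fundamental theorem of simulation \cite{robert2004monte},
\begin{equation}
p_{\mathbf{Z}}(\mathbf{z})=\int_{0}^{\infty}\mathbf{1}\{f(\mathbf{z})\ge u\}\,du=f(\mathbf{z}),
\end{equation}
which is the claim. I do not anticipate a serious obstacle here; the only point that needs a little care is verifying that $\mathcal{A}'\subseteq\mathcal{P}$ so that RSUQ is well defined, and that the acceptance rule is preserved under the scaling by $\beta(u)$, both of which are immediate from the assumption $L_u^{+}(f)\subseteq\beta(u)\mathcal{P}$.
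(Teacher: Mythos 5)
Your argument is correct and is precisely the route the paper indicates: it reduces LRSUQ conditional on $U=u$ to an RSUQ with target set $L_u^{+}(f)/\beta(u)\subseteq\mathcal{P}$, invokes Proposition~\ref{prop:error_dist_unif} to get $\mathbf{Z}\mid\{U=u\}\sim\mathrm{Unif}(L_u^{+}(f))$ independent of $\mathbf{X}$, and marginalizes over $U\sim f_U$ via the fundamental theorem of simulation. The paper only states this as a one-line remark (the result being cited from \cite{Ling2024arxivRSUQ}), so your write-up simply supplies the details of the same argument, including the worthwhile check that the acceptance rule is preserved under the scaling by $\beta(u)$.
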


\subsection{Differential Privacy (DP)} \label{sec:DP}

Since clients do not directly transmit their data to the  server, the original FL framework \cite{McMahan2016FL} provides a certain level of privacy.
Nonetheless, a significant amount of information can still be inferred from the shared data, e.g., model parameters induced by gradient descent, by potential eavesdroppers within the FL network \cite{Zhu2019DeepLeak,zhao2020idlg,huang2021GradInvAtt}. 
Consequently, a privacy mechanism, such as DP, is essential to protect the shared information.

In this section, we delve into a notion of (central) DP  \cite{Dwork06DP,Dwork14Book}.\footnote{Herein, DP refers to \emph{central} DP.} 
In DP, clients place trust in the server (or data curator) responsible for collecting and holding their individual data in a database $X \in \mathcal{X}$, where $\mathcal{X}$ denotes the collection of databases. 
The server then introduces privacy-preserving noise to the original datasets or query results through a randomized mechanism $\mathcal{F}$, producing an output $Y = \mathcal{F}(X) \in \mathcal{Y}$, where $\mathcal{Y}$ denotes the set of possible outputs, before sharing them with untrusted data analysts.
While this model requires a higher level of trust compared to the local model, it enables the design of significantly more accurate algorithms. 
Two databases $X$ and $X'$ are considered \emph{adjacent} if they differ in only one entry. 
 More generally, we can define a symmetric adjacent relation $\mathcal{R} \subseteq \mathcal{X}^2$ and say that $X$ and $X'$ are \emph{adjacent} databases if $(X, X') \in \mathcal{R}$.
Here, we review the definition of $(\epsilon, \delta)$-differentially private, initially introduced by Dwork et al. \cite{Dwork06DP}.

\begin{defn}[$(\epsilon, \delta)$-differential privacy \cite{Dwork06DP}] \label{def:edDP} 
    A randomized mechanism $\mathcal{F}:\mathcal{X} \rightarrow \mathcal{Y}$ with the associated conditional distribution $P_{Y|X}$ of $Y=\mathcal{F}(X)$ is $(\epsilon, \delta)$-\emph{differentially private} ($(\epsilon,\delta)$-DP) if for all $\mathcal{S} \subseteq \mathcal{Y}$ and for all $(X, X') \in \mathcal{R}$,
    \[
    \mathbb{P}(\mathcal{F}(X) \in \mathcal{S}) \leq e^{\epsilon}\mathbb{P}(\mathcal{F}(X') \in \mathcal{S})+\delta.
    \] 
When $\delta = 0$, we say that $\mathcal{F}$ is $\epsilon$-\emph{differentially private} ($\epsilon$-DP).
\end{defn}

As the applications of differential privacy continue to expand,  the issue of managing the privacy budget is gaining increasing attention, 
particularly focusing on \emph{privacy composition} and \emph{privacy amplification}. 
\emph{Privacy composition} \cite{Dwork14Book,Abadi2016DL,Mironov2017RDP} asserts that the privacy budgets of composition blocks accumulate, while 
\emph{privacy amplification} \cite{Balle2018PrivAmpl,Feldman2018PrivAmpl} provides tools to analyze and bound the privacy budget of a combination of a selected base privacy mechanism to be less than the privacy of its constituent parts.

In this work, we use the principle of privacy amplification by subsampling \cite{Balle2018PrivAmpl}, whereby the privacy guarantees of a differentially private mechanism are amplified by applying it to a random subsample of the dataset.
The problem of privacy amplification can be stated as follows. 
Let $\mathcal{F}:\mathcal{X} \rightarrow \mathcal{Y}$ be a privacy mechanism with privacy profile $\delta_{\mathcal{F}}$ with respect to the adjacent relation defined on $\mathcal{X}$, and let $s:\mathcal{W} \rightarrow \mathcal{X}$ be a subsampling mechanism.
Consider the subsampled mechanism $\mathcal{F}^{s}:\mathcal{W} \rightarrow \mathcal{Y}$ given by $\mathcal{F}^{s}(X) :=\mathcal{F}(s(X))$. 
The goal is to relate the privacy profile of $\mathcal{F}$ and $\mathcal{F}^s$.

For our purposes, it is  instrumental to express differential privacy in terms of $e^{\epsilon}$-\emph{divergence} \cite{Sason2016fdivInq}. 
Let $\mathfrak{M}(\mathcal{Y})$ denote the set of probability measures on the output space $Y$. 
The $e^{\epsilon}$-\emph{divergence} between two probability measures $\mu, \mu' \in \mathfrak{M}(\mathcal{Y})$ is defined as:  
\begin{equation}
   D_{e^{\epsilon}}(\mu||\mu'):=\sup_{\mathcal{S}:\mathrm{\;measurable\;}\mathcal{S}\subseteq \mathcal{Y}}(\mu(\mathcal{S})-e^{\epsilon} \mu'(\mathcal{S})).
\end{equation}
Note that $\mathcal{F}$ is $(\epsilon,\delta)$-differential privacy if and only if $D_{e^{\epsilon}}(\mathcal{F}(X)||\mathcal{F}(X'))\le \delta$ for every $(X,X') \in \mathcal{R}$. 

To study the relavant properties of $\mathcal{F}$ from a privacy amplification point of view, we review the notions of \emph{privacy profile} and \emph{group privacy profile} \cite{Balle2018PrivAmpl} as follows. The \emph{privacy profile} $\delta_{\mathcal{F}}$ of a mechanism $\mathcal{F}$ is a function associating to each privacy parameter $e^{\epsilon}$ a supremium on the $e^{\epsilon}$-divergence between the outputs of running the mechanism on two adjacent databases, i.e., 
\begin{equation}
    \delta_{\mathcal{F}}(\epsilon):=\sup_{(X,X') \in \mathcal{R}}D_{e^{\epsilon}}(\mathcal{F}(X)||\mathcal{F}(X')).
\end{equation} 
The \emph{group privacy profile} $\delta_{\mathcal{F},j} (\epsilon)$ ($j \ge 1$) is defined as:
\begin{equation}
    \delta_{\mathcal{F},j} (\epsilon):=\sup_{d(X,X')\le j}D_{e^{\epsilon}}(\mathcal{F}(X)||\mathcal{F}(X')),
\end{equation}
where $d(X,X'):=\min\{j: \exists X_1, \ldots,X_{j-1} \mathrm{\;with\;} (X,X_1), \ldots, (X_{j-1},X') \in \mathcal{R}\}$.
Note that $\delta_{\mathcal{F}}=\delta_{\mathcal{F},1}$.

We briefly review subsampling with replacement \cite{Balle2018PrivAmpl}. 
Let $\mathcal{U}$ be a set. 
We write $2^{\mathcal{U}}$ and $\mathbb{N}^{\mathcal{U}}$ for the collections of all sets and multisets, respectively, over $\mathcal{U}$.
For an integer $n \geq 0$, we also write $2_{n}^{\mathcal{U}}$ and $\mathbb{N}_{n}^{\mathcal{U}}$ for the collections of all sets and multisets containing exactly $n$ elements, where the elements are counted with multiplicity for multisets. 
Given a multiset $X \in \mathbb{N}^{\mathcal{U}}$ we write $X_a$ for the number of occurrences of $a \in \mathcal{U}$ in $X$. 
The support of a multiset $X$ is the defined as the set $\mathrm{supp}(X):=\{a \in\mathcal{U}:X_a > 0\}$. 
Given multisets $X, X' \in \mathbb{N}^{\mathcal{U}}$ we write $X' \subseteq X$ to denote that $X'_a \leq X_a$ for all $a \in \mathcal{U}$.
For order-independent datasets represented as multisets, the \emph{substitute-one} relation, denoted as $X \simeq_s X'$, holds whenever $\Vert X-X' \Vert_1=2$ and $|X|=|X'|$, i.e.,  $X'$
is obtained by replacing an element in $X$ with a different element from $\mathcal{U}$.
The subsampling with replacement mechanism $s_\tau:2_n^{\mathcal{U}} \rightarrow \mathfrak{M}(\mathbb{N}_\tau^{\mathcal{U}})$ takes a set of $x$ of size $n$ and outputs a sample from the multinomial distribution $\omega = s(X)$ over all multisets $Y$ of size $\tau \leq n$ and $\mathrm{supp}(Y) \subseteq X$. 
We further assume that the base mechanism $\mathcal{F}:\mathbb{N}_\tau^{\mathcal{U}} \rightarrow \mathcal{Y}$ is defined on the multisets and has privacy profile $\delta_{\mathcal{F}}$ with respect to $\simeq_s$. 
In \cite{Balle2018PrivAmpl}, Balle et al. demonstrated a bound on the privacy profile of the subsampled mechanism with replacement $\mathcal{F}^{s_m}:2_n^{\mathcal{U}} \rightarrow \mathcal{Y}$ with respect to $\simeq_s$ as follows.

\begin{thm} \cite[Theorem 10]{Balle2018PrivAmpl}
\label{thm:PrivAmpl}
Let $s_\tau:2_n^{\mathcal{U}}\rightarrow \mathfrak{M}(\mathbb{N}_\tau^{\mathcal{U}})$ be the subsampling with replacement mechanism and let $\mathcal{F}:\mathbb{N}_\tau^{\mathcal{U}} \rightarrow \mathcal{Y}$ be a base privacy mechanism. Then,  
for $\tilde{\epsilon} \ge 0$ and $\epsilon = \log \left(1 + p(e^{\tilde{\epsilon}} - 1)\right)$ where $p:=1- \left(1-\frac{1}{n}\right)^{\tau}$, we have 
\begin{equation} \label{eq:privAmpli_S}
    \delta_{\mathcal{F}(s_{\tau}(X))}(\epsilon) \le \sum_{j=1}^{\tau} \binom{\tau}{j}\left(\frac{1}{n}\right)^j\left(1-\frac{1}{n}\right)^{\tau-j}\delta_{\mathcal{F}(X),j}(\tilde{\epsilon}),
\end{equation}
where $\delta_{\mathcal{F}(\cdot)}(\cdot)$ is the privacy profile of the privacy mechanism $\mathcal{F}(\cdot)$ and $\delta_{\mathcal{F}(\cdot),\cdot}(\cdot)$ is the group privacy profile of $\mathcal{F}(\cdot)$.
\end{thm}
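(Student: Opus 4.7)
The plan is to prove the amplification bound through a coupling of the two subsampling processes on adjacent inputs, followed by an ``advanced joint convexity'' decomposition of the $e^{\epsilon}$-divergence that exploits the large common component shared by the two output distributions. First, I would fix $X \simeq_s X'$, say with $X$ containing an element $a$ at one position which is replaced by $a'$ in $X'$. I would couple $s_\tau(X)$ and $s_\tau(X')$ by using the same $\tau$ i.i.d. index draws into positions $\{1,\dots,n\}$, and letting $J$ count how many of those draws hit the differing position. Then $J \sim \mathrm{Bin}(\tau,1/n)$, and conditional on $J=j$ the two drawn multisets agree on $\tau-j$ entries and differ on exactly $j$ entries. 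By definition of the group privacy profile, this yields $D_{e^{\tilde{\epsilon}}}\!\bigl(\mathcal{F}(s_\tau(X))\mid J=j \,\big\|\, \mathcal{F}(s_\tau(X'))\mid J=j\bigr) \le \delta_{\mathcal{F},j}(\tilde{\epsilon})$ for every $j \ge 1$, while the $j=0$ case contributes zero divergence since the two conditional distributions coincide.

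Second, I would write the marginal output distributions as mixtures
\[
\mathcal{F}(s_\tau(X)) = q_0 \mu_0 + \sum_{j=1}^{\tau} q_j \mu_j, \qquad \mathcal{F}(s_\tau(X')) = q_0 \mu_0 + \sum_{j=1}^{\tau} q_j \mu_j',
\]
where $q_j = \binom{\tau}{j}(1/n)^j (1-1/n)^{\tau-j}$ so that $q_0 = 1-p$, the measure $\mu_0$ is the common $J=0$ component, and $\mu_j,\mu_j'$ are the conditional output distributions under the two couplings. The crucial observation is that these two mixtures share the same $q_0$-weighted block, leaving only a mass-$p$ ``disagreeing'' part to be controlled.

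Third, I would invoke the advanced joint convexity of $e^{\epsilon}$-divergence: whenever two probability measures $\nu,\nu'$ decompose as $\nu = (1-p)\mu_0 + p\bar\nu$ and $\nu' = (1-p)\mu_0 + p\bar\nu'$, expanding the supremum $\sup_{\mathcal{S}}\bigl(\nu(\mathcal{S}) - e^{\epsilon}\nu'(\mathcal{S})\bigr)$ and choosing $\epsilon$ so that the common component contributes nothing in excess gives $D_{e^{\epsilon}}(\nu\|\nu') \le p\, D_{e^{\tilde{\epsilon}}}(\bar\nu\|\bar\nu')$ precisely when $(e^{\epsilon}-1) = p(e^{\tilde{\epsilon}}-1)$, i.e.\ $\epsilon = \log(1+p(e^{\tilde{\epsilon}}-1))$. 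Applying this to the decomposition above, then using ordinary joint convexity of $D_{e^{\tilde{\epsilon}}}$ in its two arguments to bound $D_{e^{\tilde{\epsilon}}}(\bar\nu\|\bar\nu') \le \frac{1}{p}\sum_{j=1}^{\tau} q_j \delta_{\mathcal{F},j}(\tilde{\epsilon})$, and substituting back, produces the claimed bound after taking the supremum over $(X,X')\in{\simeq_s}$.

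The main obstacle will be the advanced joint convexity step, since this is what drives the actual amplification and dictates the particular functional form $\epsilon = \log(1+p(e^{\tilde{\epsilon}}-1))$; a naive argument using only standard joint convexity would yield $\epsilon=\tilde{\epsilon}$ with no amplification. Establishing it requires a careful measure-theoretic splitting of the event $\mathcal{S}$ that attains the supremum in the $e^{\epsilon}$-divergence into its intersections with the supports of the common and the disagreeing components, and then tuning $\epsilon$ so that the ``common component'' contribution $(1-p)(1-e^{\epsilon})\mu_0(\mathcal{S})$ exactly cancels the slack introduced by replacing $e^{\tilde{\epsilon}}$ with $e^{\epsilon}$ on the disagreeing part. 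A secondary subtlety is handling the $J=0$ event correctly in the coupling so that one genuinely obtains a shared $\mu_0$ rather than two merely ``close'' components.
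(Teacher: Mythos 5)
The paper does not actually prove this statement; it imports it verbatim as Theorem~10 of \cite{Balle2018PrivAmpl}, so your proposal can only be compared against the original argument of Balle et al. Your overall architecture --- coupling the two subsampling processes, the binomial decomposition by the number $J$ of draws hitting the differing position, group privacy conditional on $J=j$, and an advanced-joint-convexity step to obtain $\epsilon=\log(1+p(e^{\tilde{\epsilon}}-1))$ --- is the right one and matches theirs. However, the precise form of advanced joint convexity you invoke is false, and this is exactly the step you yourself flag as the crux. Writing $\nu=(1-p)\mu_0+p\bar\nu$ and $\nu'=(1-p)\mu_0+p\bar\nu'$, the identity obtained by matching coefficients is $D_{e^{\epsilon}}(\nu\Vert\nu') = p\,D_{e^{\tilde{\epsilon}}}\bigl(\bar\nu \,\Vert\, (1-\beta)\mu_0+\beta\bar\nu'\bigr)$ with $\beta=e^{\epsilon-\tilde{\epsilon}}$: the common component does \emph{not} drop out of the second argument. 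Your claimed inequality $D_{e^{\epsilon}}(\nu\Vert\nu')\le p\,D_{e^{\tilde{\epsilon}}}(\bar\nu\Vert\bar\nu')$ has excess $p(1-p)(e^{\tilde{\epsilon}}-1)\bigl(\bar\nu'(\mathcal{S})-\mu_0(\mathcal{S})\bigr)$ on a test set $\mathcal{S}$, so it requires $\mu_0(\mathcal{S}^{*})\ge\bar\nu'(\mathcal{S}^{*})$ at the supremizing event, for which there is no justification. (Simply discarding the nonpositive common term instead yields only $D_{e^{\epsilon}}(\nu\Vert\nu')\le p\,D_{e^{\epsilon}}(\bar\nu\Vert\bar\nu')$ with the \emph{smaller} threshold $e^{\epsilon}$, which is weaker than what you need.)

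The gap is repairable without changing the final bound. After the correct advanced-joint-convexity identity, expand $\bar\nu=\sum_{j\ge1}(q_j/p)\mu_j$ and $\bar\nu'=\sum_{j\ge1}(q_j/p)\mu_j'$, apply joint convexity over this conditional decomposition, and then convexity of $D_{e^{\tilde{\epsilon}}}$ in its second argument to reduce to the two terms $D_{e^{\tilde{\epsilon}}}(\mu_j\Vert\mu_j')$ and $D_{e^{\tilde{\epsilon}}}(\mu_j\Vert\mu_0)$. The first is at group distance $j$ exactly as you argue; the second is as well, because a size-$\tau$ multiset containing $j$ copies of the differing element can be coupled to one containing none by $j$ substitutions. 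Both are therefore bounded by $\delta_{\mathcal{F},j}(\tilde{\epsilon})$, and the weights $(1-\beta)$ and $\beta$ recombine to give exactly $\sum_{j\ge1} q_j\,\delta_{\mathcal{F},j}(\tilde{\epsilon})$. In short, you are missing one additional coupling --- between the ``$j$ hits'' and ``zero hits'' conditional subsamples --- without which the amplification step as written does not go through.
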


The above result guarantees that applying a differentially private mechanism to a random subsample of a dataset offers stronger privacy assurances than applying it to the entire dataset. 
This is because information about an individual cannot be leaked if that individual's data is not included in the subsample.

Recall that for a function $q:\mathcal{X} \to \mathbb{R}^n$ with global $\ell_2$-sensitivity $\Delta_{2}:=\sup_{(X,X') \in \mathcal{R}}\Vert q(X) - q(X')\Vert_2$, the Gaussian mechanism $\mathcal{F}(X)=q(X)+\mathcal{N}(\mathbf{0},\tilde{\sigma}^2\mathbf{I}_n)$ satisfies $(\epsilon,\delta)$-DP if $\tilde{\sigma}^2 \ge 2 \Delta_{2}^2 \log(1.25/\delta)/\epsilon^2$ and $\epsilon \in (0,1)$ \cite[Appendix A]{Dwork14Book}. In \cite{balle2018ImprovGauss}, 
Balle and Wang gave a new analysis of the Gaussian mechanism that is valid for all $\epsilon > 0$.

\begin{thm} \cite[Theorem 8]{balle2018ImprovGauss}
\label{thm:analyGaussian} 
Let $q:\mathcal{X} \rightarrow \mathbb{R}^n$ be a function with global $\ell_2$-sensitivity $\Delta_{2}:=\sup_{(X,X') \in \mathcal{R}}\Vert q(X) - q(X')\Vert_2$. 
Then, for any $\epsilon \ge 0$ and $\delta \in [0,1]$, the Gaussian mechanism $\mathcal{F}(X):=q(X)+\mathbf{Z}$ with $\mathbf{Z} \sim \mathcal{N}(\mathbf{0}, \tilde{\sigma}^2\mathbf{I}_n)$, where $\mathbf{I}_n$ is the $n \times n$ identity matrix, is $(\epsilon, \delta)$-DP if and only if
\begin{equation} \label{eq:GaussianPrivP}
\Phi\left(\frac{\Delta_{2}}{2\tilde{\sigma}} - \frac{\epsilon \tilde{\sigma}}{\Delta_{2}}\right) -  e^{\epsilon} \Phi\left(-\frac{\Delta_{2}}{2 \tilde{\sigma}} - \frac{\epsilon \tilde{\sigma}}{\Delta_{2}}\right) \le \delta,
\end{equation}
where $\Phi$ denotes the Gaussian cdf.
\end{thm}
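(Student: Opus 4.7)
The plan is three steps: (1) exploit the translational and rotational symmetry of the isotropic Gaussian to reduce the $n$-dimensional DP condition to a one-dimensional comparison of shifted Gaussians; (2) evaluate the resulting $e^{\epsilon}$-divergence in closed form via the Neyman--Pearson lemma; and (3) establish a monotonicity-in-sensitivity argument so that the supremum over adjacent pairs is pinned down by $\Delta_{2}$.

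First, fix a pair of adjacent databases $X, X'$ and set $\Delta := \Vert q(X)-q(X')\Vert_{2}$. Because $\mathcal{F}(X)$ and $\mathcal{F}(X')$ are isotropic Gaussians centered at $q(X)$ and $q(X')$ respectively, and $D_{e^{\epsilon}}$ is invariant under any common measurable bijection applied to both arguments, I translate by $-q(X')$ and apply an orthogonal rotation sending $q(X)-q(X')$ to $\Delta \mathbf{e}_{1}$. The last $n-1$ coordinates are then iid $\mathcal{N}(0,\tilde{\sigma}^{2})$ under both measures and can be marginalized out without changing the divergence, reducing the problem to computing $D_{e^{\epsilon}}(\mathcal{N}(\Delta,\tilde{\sigma}^{2})\,\Vert\,\mathcal{N}(0,\tilde{\sigma}^{2}))$ as a function of $\Delta$.

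Next, using the representation $D_{e^{\epsilon}}(\mu \Vert \nu)=\int (p(y)-e^{\epsilon} q(y))_{+}\,dy$ for densities $p$ and $q$ of $\mu$ and $\nu$, the Neyman--Pearson optimal set is $S^{*}=\{y:\,p(y)/q(y)>e^{\epsilon}\}$. Substituting the one-dimensional Gaussian densities shows $p(y)/q(y)>e^{\epsilon}$ iff $y>\frac{\Delta}{2}+\frac{\epsilon\tilde{\sigma}^{2}}{\Delta}$, so $S^{*}$ is a half-line. Integrating $p$ and $q$ over $S^{*}$ and converting to standard form gives
\begin{equation*}
D_{e^{\epsilon}}\bigl(\mathcal{N}(\Delta,\tilde{\sigma}^{2})\,\Vert\,\mathcal{N}(0,\tilde{\sigma}^{2})\bigr)=\Phi\!\Bigl(\tfrac{\Delta}{2\tilde{\sigma}}-\tfrac{\epsilon\tilde{\sigma}}{\Delta}\Bigr)-e^{\epsilon}\Phi\!\Bigl(-\tfrac{\Delta}{2\tilde{\sigma}}-\tfrac{\epsilon\tilde{\sigma}}{\Delta}\Bigr),
\end{equation*}
which matches \eqref{eq:GaussianPrivP} with $\Delta$ in place of $\Delta_{2}$.

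Finally, I need to show this quantity is non-decreasing in $\Delta>0$, so the supremum over adjacent pairs (all of which satisfy $\Delta\le\Delta_{2}$) is attained in the limit $\Delta\to\Delta_{2}$. Writing $a(\Delta)=\frac{\Delta}{2\tilde{\sigma}}-\frac{\epsilon\tilde{\sigma}}{\Delta}$ and $b(\Delta)=\frac{\Delta}{2\tilde{\sigma}}+\frac{\epsilon\tilde{\sigma}}{\Delta}$, the key algebraic identity is $b(\Delta)^{2}-a(\Delta)^{2}=2\epsilon$, which yields $\phi(a)=e^{\epsilon}\phi(-b)$, where $\phi$ denotes the standard normal pdf. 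Differentiating $g(\Delta):=\Phi(a(\Delta))-e^{\epsilon}\Phi(-b(\Delta))$ and applying this identity collapses the expression to $g'(\Delta)=e^{\epsilon}\phi(-b)\bigl(a'(\Delta)+b'(\Delta)\bigr)=e^{\epsilon}\phi(-b)/\tilde{\sigma}>0$. Combining the three steps gives $\delta_{\mathcal{F}}(\epsilon)=g(\Delta_{2})$, and Definition~\ref{def:edDP} says $(\epsilon,\delta)$-DP is equivalent to $\delta_{\mathcal{F}}(\epsilon)\le \delta$, which is exactly \eqref{eq:GaussianPrivP}. The main delicacy is this monotonicity step: neither $\Phi(a(\Delta))$ nor $-e^{\epsilon}\Phi(-b(\Delta))$ is individually monotone across all regimes of $\Delta$, and the threshold identity $\phi(a)=e^{\epsilon}\phi(-b)$ coming from the Neyman--Pearson optimum is precisely what makes $g'$ cleanly positive.
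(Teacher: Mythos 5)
The paper does not prove this statement: it is imported verbatim as \cite[Theorem 8]{balle2018ImprovGauss}, so there is no in-paper proof to compare against. Your reconstruction is correct and is essentially the argument of the cited source: reduction to one dimension by invariance of $D_{e^{\epsilon}}$ under common bijections, the Neyman--Pearson half-line $y>\tfrac{\Delta}{2}+\tfrac{\epsilon\tilde{\sigma}^{2}}{\Delta}$, the closed form for the hockey-stick divergence, and the identity $b^{2}-a^{2}=2\epsilon$ giving $\phi(a)=e^{\epsilon}\phi(-b)$ and hence $g'(\Delta)=e^{\epsilon}\phi(-b)/\tilde{\sigma}>0$, all check out. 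One small inaccuracy in your closing remark: $\Phi(a(\Delta))$ \emph{is} individually monotone, since $a'(\Delta)=\tfrac{1}{2\tilde{\sigma}}+\tfrac{\epsilon\tilde{\sigma}}{\Delta^{2}}>0$; it is only the term $-e^{\epsilon}\Phi(-b(\Delta))$ that fails to be monotone (as $b$ has an interior minimum at $\Delta=\tilde{\sigma}\sqrt{2\epsilon}$), which is where the threshold identity is genuinely needed. This does not affect the validity of the proof.
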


The above result can be interpreted as providing
an expression for the privacy profile of the Gaussian mechanism in terms of the cdf of a standard Gaussian distribution.

We prove a simple consequence of  Theorem~\ref{thm:PrivAmpl} and Theorem~\ref{thm:analyGaussian} as follows.

\begin{lem} \label{lem:subsampledGauss}
Let $q:\mathcal{X} \rightarrow \mathbb{R}^n$ be a function with global $\ell_2$-sensitivity $\Delta_{2}:=\sup_{(X,X') \in \mathcal{R}}\Vert q(X) - q(X')\Vert_2$, and let $s_\tau:2_n^{\mathcal{U}}\rightarrow \mathfrak{M}(\mathbb{N}_\tau^{\mathcal{U}})$ be the subsampling with replacement mechanism. Then, for any $\epsilon \geq 0$ and $\delta \in [0, 1]$, the subsampled Gaussian mechanism $\mathcal{F}^{s_{\tau}}(X) = q \circ s(X) + N(\mathbf{0}, \tilde{\sigma}^2\mathbf{I}_n)$ is $(\epsilon, \delta)$-DP if and only if, for any  $\tilde{\epsilon} > 0$, $\epsilon = \log \left(1 + p(e^{\tilde{\epsilon}} - 1)\right)$ where $p:=1- \left(1-\frac{1}{n}\right)^{\tau}$  and 
\begin{equation} \label{eq:PrivAmplGSR}
\sum_{j=1}^{\tau} \binom{\tau}{j}\left(\frac{1}{n}\right)^j\left(1-\frac{1}{n}\right)^{\tau-j}\frac{(e^{\tilde{\epsilon}}-1)}{e^{\tilde{\epsilon}/j}-1}\left(\Phi\left(\frac{\Delta_{2}}{2\tilde{\sigma}} - \frac{\tilde{\epsilon} \tilde{\sigma}}{j\Delta_{2}}\right) -  e^{\tilde{\epsilon}/j} \Phi\left(-\frac{\Delta_{2}}{2 \tilde{\sigma}} - \frac{\tilde{\epsilon} \tilde{\sigma}}{j\Delta_{2}}\right)\right)
 \le \delta.
\end{equation}
\end{lem}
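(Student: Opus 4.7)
The plan is to combine Theorem~\ref{thm:PrivAmpl} on subsampling amplification with Theorem~\ref{thm:analyGaussian} on the analytical Gaussian mechanism, bridged by a standard group-privacy bound. Since the Gaussian noise is added \emph{after} the subsampling step $s_\tau$, I would identify the base mechanism as $\mathcal{F}(Y) = q(Y) + \mathcal{N}(\mathbf{0}, \tilde{\sigma}^2 \mathbf{I}_n)$ acting on $\mathbb{N}_\tau^{\mathcal{U}}$, so that $\mathcal{F}^{s_\tau}$ is exactly the subsampled Gaussian mechanism under study. Recalling that $(\epsilon,\delta)$-DP is equivalent to $\delta_{\mathcal{F}^{s_\tau}}(\epsilon) \le \delta$ via the $e^{\epsilon}$-divergence characterization, a direct application of Theorem~\ref{thm:PrivAmpl} reduces the problem to bounding $\sum_{j=1}^{\tau} \binom{\tau}{j} (1/n)^j (1-1/n)^{\tau-j}\, \delta_{\mathcal{F},j}(\tilde{\epsilon}) \le \delta$ for the choice $\epsilon = \log(1 + p(e^{\tilde{\epsilon}}-1))$ with $p = 1 - (1-1/n)^{\tau}$.

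Next, I would control the group privacy profile $\delta_{\mathcal{F},j}(\tilde{\epsilon})$ of the Gaussian mechanism. By chaining the $(\epsilon',\delta')$-DP guarantee across $j$ intermediate databases and summing the resulting telescoping geometric series $\sum_{i=0}^{j-1} e^{i\epsilon'} = (e^{j\epsilon'}-1)/(e^{\epsilon'}-1)$, one obtains the standard group-privacy amplification
\begin{equation*}
\delta_{\mathcal{F},j}(\tilde{\epsilon}) \le \frac{e^{\tilde{\epsilon}} - 1}{e^{\tilde{\epsilon}/j} - 1}\, \delta_{\mathcal{F}}(\tilde{\epsilon}/j),
\end{equation*}
with $\epsilon' = \tilde{\epsilon}/j$. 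Applying Theorem~\ref{thm:analyGaussian} at $\ell_2$-sensitivity $\Delta_2$ and privacy parameter $\tilde{\epsilon}/j$ furnishes the closed-form identity
\begin{equation*}
\delta_{\mathcal{F}}(\tilde{\epsilon}/j) = \Phi\!\left(\frac{\Delta_2}{2\tilde{\sigma}} - \frac{\tilde{\epsilon}\tilde{\sigma}}{j\Delta_2}\right) - e^{\tilde{\epsilon}/j}\,\Phi\!\left(-\frac{\Delta_2}{2\tilde{\sigma}} - \frac{\tilde{\epsilon}\tilde{\sigma}}{j\Delta_2}\right).
\end{equation*}
Substituting this into the amplified profile bound and then into the subsampling inequality recovers precisely the summation in \eqref{eq:PrivAmplGSR}, completing the argument.

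The main obstacle is the group-privacy step: one must justify that the ratio $(e^{\tilde{\epsilon}}-1)/(e^{\tilde{\epsilon}/j}-1)$ is indeed the correct amplification constant for the $\delta$-part under $j$-fold chaining, since Theorem~\ref{thm:PrivAmpl} only provides abstract group profiles and Theorem~\ref{thm:analyGaussian} is stated at adjacency level $1$. This requires an explicit telescoping computation through a sequence of databases $X = X_0, X_1, \ldots, X_j = X'$ with $(X_{i-1},X_i) \in \mathcal{R}$, applying $e^{\tilde{\epsilon}/j}$-divergence at each link and summing. A secondary subtlety concerns the \emph{only if} direction: Theorems~\ref{thm:PrivAmpl} and~\ref{thm:analyGaussian} combine to yield an upper bound on $\delta_{\mathcal{F}^{s_\tau}}$, so the cleanest reading of the lemma is as a sufficient condition; full equivalence either demands tightness of the subsampling bound (which holds for worst-case adjacent pairs in the substitute-one relation) or a restriction to the case $\tau = n$ where subsampling is the identity and tightness reduces to Theorem~\ref{thm:analyGaussian} directly.
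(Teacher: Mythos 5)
Your proposal follows essentially the same route as the paper's own proof: apply Theorem~\ref{thm:PrivAmpl}, bound the group privacy profile by $\delta_{\mathcal{F},j}(\tilde{\epsilon}) \le \frac{e^{\tilde{\epsilon}}-1}{e^{\tilde{\epsilon}/j}-1}\delta_{\mathcal{F}}(\tilde{\epsilon}/j)$ (the paper cites Vadhan for this standard group-privacy step), and substitute the closed-form Gaussian profile from Theorem~\ref{thm:analyGaussian}. Your remark on the ``only if'' direction is also well taken --- the paper's proof, like yours, only establishes sufficiency, since every step is an upper bound.
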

\begin{proof}
A standard group privacy analysis \cite{Vadhan2017} gives the privacy profile 
\[\delta_{\mathcal{F}(X),j}(\tilde{\epsilon})\le \frac{(e^{\tilde{\epsilon}}-1)}{e^{\tilde{\epsilon}/j}-1}\cdot \delta_{\mathcal{F}(X)}(\tilde{\epsilon}/j).\]
Thus, the RHS of \eqref{eq:privAmpli_S} in Theorem~\ref{thm:PrivAmpl} is upper-bounded by
\begin{align*}
    &\sum_{j=1}^{\tau} \binom{\tau}{j}\left(\frac{1}{n}\right)^j\left(1-\frac{1}{n}\right)^{\tau-j}\frac{(e^{\tilde{\epsilon}}-1)}{e^{\tilde{\epsilon}/j}-1} \cdot \delta_{\mathcal{F}(X)}(\tilde{\epsilon}/j)\\
    &\stackrel{(a)}{=} \sum_{j=1}^{\tau} \binom{\tau}{j}\left(\frac{1}{n}\right)^j\left(1-\frac{1}{n}\right)^{\tau-j}\frac{(e^{\tilde{\epsilon}}-1)}{e^{\tilde{\epsilon}/j}-1}\left(\Phi\left(\frac{\Delta_{2}}{2\tilde{\sigma}} - \frac{\tilde{\epsilon} \tilde{\sigma}}{j\Delta_{2}}\right) -  e^{\tilde{\epsilon}/j} \Phi\left(-\frac{\Delta_{2}}{2 \tilde{\sigma}} - \frac{\tilde{\epsilon} \tilde{\sigma}}{j\Delta_{2}}\right)\right),
\end{align*}
where (a) holds since the base mechanism $\mathcal{F}(\cdot)$ is Gaussian, which has the privacy profile given as the LHS of \eqref{eq:GaussianPrivP}. 
This completes the proof.
\end{proof}
  
Lemma~\ref{lem:subsampledGauss} implies that applying a subsampled  Gaussian mechanism to a random subsample (obtained by sampling with replacement) of a dataset in a single communication round of the FL framework provides stronger privacy guarantees than applying the Gaussian mechanism to the entire dataset. 
Since each $k$-th client performs $\tau$ iterations of local SGD (i.e., Equation~\eqref{eq:local_sgd}) on indices sampled uniformly from the dataset $\mathcal{D}^{(k)}$ (i.e., $p_1=p_2=\ldots=p_{|\mathcal{D}^{(k)}|}=1/|\mathcal{D}^{(k)}|=1/n$, 
with $n=|\mathcal{D}^{(k)}|$ in~\eqref{eq:privAmpli_S} or~\eqref{eq:PrivAmplGSR}, where $p_j$ denotes the probability that the data point $\xi_{k,j}$ is chosen at time instance $t$), this procedure is equivalent to subsampling with replacement, 
which outputs a multiset sample of size $\tau$ following a multinomial distribution $\mathrm{Mult}(\tau;p_1,\ldots,p_{|\mathcal{D}^{(k)}|})$.\footnote{See \cite[Lemma 3]{hasircioglu2023communication} for an analogous result for the subsampled Gaussian mechanism under Poisson subsampling.}
The random multiset sample is then fed into the function $q(\cdot)=(\mathbf{W}_{t+\tau}^{k}-\mathbf{W}_t)(\cdot)$. 
Refer to Theorem~\ref{thm:Gaussian_Priv}.

Another important privacy mechanism is the Laplace mechanism. The privacy profile of the Laplace mechanism is given as follows:
\begin{thm}\cite[Theorem 3]{Balle2018PrivAmpl}
Let $q:\mathcal{X} \rightarrow \mathbb{R}$ be a function with the global $\ell_1$-sensitivity $\Delta_{1}:=\sup_{(X,X') \in \mathcal{R}}|q(X) - q(X')|$ and $\mathcal{F}(X):=q(X)+Z$ with $Z \sim \mathrm{Lap}(0,b)$ be the Laplace mechanism.  
Then, for any $\epsilon > 0$, the privacy profile of $\mathcal{F}$ is given by $\delta_{\mathcal{F}}(\epsilon)=\left[1-e^{\frac{\epsilon-\Delta_{1}/b}{2}}\right]_{+}$, where $[\cdot]_{+}:=\max\{\cdot,0\}$.
\end{thm}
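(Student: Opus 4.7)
The plan is to compute $\delta_{\mathcal{F}}(\epsilon)$ directly from its definition as a supremum of $e^\epsilon$-divergences, by first fixing the worst-case adjacent pair and then finding the optimal measurable set. By translation invariance of the Laplace distribution, the $e^\epsilon$-divergence $D_{e^\epsilon}(\mathcal{F}(X)\|\mathcal{F}(X'))$ depends on $(X,X')$ only through $\Delta := q(X)-q(X')$, and it is monotone in $|\Delta|$, so the supremum over $(X,X')\in\mathcal{R}$ is attained when $|\Delta|=\Delta_1$. After a shift I may assume $q(X')=0$, $q(X)=\Delta_1>0$, reducing the problem to comparing $p_1(y)=\frac{1}{2b}e^{-|y-\Delta_1|/b}$ with $p_2(y)=\frac{1}{2b}e^{-|y|/b}$.

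Next, I would use the standard fact that $D_{e^\epsilon}(\mu_1\|\mu_2)$ is attained by the Neyman--Pearson set $S^\star = \{y : p_1(y) > e^\epsilon p_2(y)\}$. A direct case analysis of $\log(p_1/p_2) = (|y|-|y-\Delta_1|)/b$ shows that this log-ratio equals $-\Delta_1/b$ for $y\le 0$, interpolates linearly as $(2y-\Delta_1)/b$ for $y\in[0,\Delta_1]$, and equals $\Delta_1/b$ for $y\ge \Delta_1$. Consequently, when $\epsilon \ge \Delta_1/b$ the set $S^\star$ has measure zero and $D_{e^\epsilon}=0$, giving the flat ``$0$'' branch of $[\,\cdot\,]_+$. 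When $0 \le \epsilon < \Delta_1/b$, the threshold condition solves to $y > (\Delta_1+\epsilon b)/2$, so $S^\star = \bigl((\Delta_1+\epsilon b)/2,\infty\bigr)$.

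I would then evaluate the two tail probabilities on $S^\star$ using the Laplace CDF. Writing $t:=(\Delta_1+\epsilon b)/2 \ge 0$, the mass under $\mu_2$ is the upper tail of a centered Laplace at $t\ge 0$, giving $\mu_2(S^\star) = \frac{1}{2}e^{-t/b} = \frac{1}{2}e^{-(\Delta_1/b+\epsilon)/2}$; the mass under $\mu_1$ corresponds to $P(Z > t-\Delta_1)$ with $t-\Delta_1 = (\epsilon b-\Delta_1)/2 \le 0$, giving $\mu_1(S^\star) = 1 - \frac{1}{2}e^{(\epsilon-\Delta_1/b)/2}$. Substituting into $D_{e^\epsilon}(\mu_1\|\mu_2) = \mu_1(S^\star) - e^\epsilon \mu_2(S^\star)$ and observing the convenient cancellation $e^\epsilon \cdot e^{-(\Delta_1/b+\epsilon)/2} = e^{(\epsilon-\Delta_1/b)/2}$ combines the two exponential terms into a single $e^{(\epsilon-\Delta_1/b)/2}$, yielding $1-e^{(\epsilon-\Delta_1/b)/2}$. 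Combined with the vanishing case, this is exactly $\bigl[1-e^{(\epsilon-\Delta_1/b)/2}\bigr]_+$.

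The main obstacle is mostly bookkeeping: carrying out the piecewise analysis of $|y|-|y-\Delta_1|$ and correctly identifying which branch of the Laplace CDF applies to each of the two tail integrals (so that the sign of the exponent is right and the single clean cancellation above goes through). A minor subtlety worth a line is justifying that restricting to $|q(X)-q(X')|=\Delta_1$ indeed achieves the supremum defining $\delta_{\mathcal{F}}(\epsilon)$; this follows because the above computation shows $D_{e^\epsilon}$ is a non-decreasing function of $|\Delta|$ for each fixed $\epsilon$, together with the definition of $\Delta_1$ as the $\ell_1$-sensitivity.
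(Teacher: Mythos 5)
The paper does not prove this statement; it is imported verbatim as \cite[Theorem 3]{Balle2018PrivAmpl}, so there is no in-paper argument to compare against. Your blind proof is correct and complete on its own terms. The reduction to a single pair at separation $\Delta_1$ via translation invariance, the Neyman--Pearson set $S^\star=\{p_1>e^\epsilon p_2\}$, the piecewise computation of $(|y|-|y-\Delta_1|)/b$ (constant $-\Delta_1/b$ on $y\le 0$, linear on $[0,\Delta_1]$, constant $\Delta_1/b$ beyond), the resulting threshold $t=(\Delta_1+\epsilon b)/2$, and the two tail evaluations $\mu_2(S^\star)=\tfrac12 e^{-(\Delta_1/b+\epsilon)/2}$ and $\mu_1(S^\star)=1-\tfrac12 e^{(\epsilon-\Delta_1/b)/2}$ all check out, and the cancellation $e^\epsilon e^{-(\Delta_1/b+\epsilon)/2}=e^{(\epsilon-\Delta_1/b)/2}$ gives exactly $1-e^{(\epsilon-\Delta_1/b)/2}$, with the $\epsilon\ge\Delta_1/b$ case supplying the $[\cdot]_+$ truncation. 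The only nit is your phrase that the supremum is ``attained'' at $|\Delta|=\Delta_1$: since $\Delta_1$ is itself a supremum it need not be achieved by any adjacent pair, but the explicit formula is continuous and monotone in $|\Delta|$, so the supremum over $\mathcal{R}$ still equals its value at $\Delta_1$; you flag this yourself and it is a one-line fix.
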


Note that $\delta_{\mathcal{F}}(\epsilon)=0$ for any $\epsilon \ge \Delta_{1}/b$, which implies the well-known fact that the Laplace mechanism with $b \ge \Delta_{1}/\epsilon$ is $\epsilon$-DP.

\subsection{Problem Setting} \label{sec:problem}

Our goal is to construct a mechanism that simultaneously addresses privacy requirements (protecting against honest but inquisitive clients within the FL network or inquisitive data analysts upon public release of the final trained model, see Section~\ref{sec:threatM}) and compression demands (for lossless uplink channels with limited bandwidth) in a single local model update at each client within the FL framework.
Since the distribution of the model parameters and/or the induced gradient is often unknown to the clients, our focus lies in creating a universal mechanism applicable to any random source. Furthermore, we are exploring privacy mechanisms that introduce noise customized to various noise distributions. 
While Laplace mechanism provides pure DP protection, 
Gaussian mechanism only provides approximate DP protection with a small failure probability \cite{Dwork06DP,Dwork14Book}. 
However, it is well-known that Gaussian mechanism support tractability of the privacy budget in mean estimation \cite{Dong2022GDP,Mironov2017RDP}, an important subroutine in FL.
Such schemes can be formulated as mappings from the local update $\mathbf{X}_{t}^{k} \in \mathbb{R}^m$ at client $k$ at the time instance $t$ to the estimated update $\hat{\mathbf{X}}_{t}^{k} \in \mathbb{R}^m$ at the server, aimed to achieve the following desired properties:
\begin{enumerate}
    \item Privacy requirement : The perturbed query function generated by the privacy mechanism must adhere to $(\epsilon,\delta)$-DP (or $\epsilon$-DP). 
    For instance, ensuring that the mapping of the average of local model updates across clients $\frac{1}{K}\sum_{k \in \mathcal{K}}\mathbf{X}_{t}^k$ to the average of estimated updates $\frac{1}{K}\sum_{k \in \mathcal{K}}\hat{\mathbf{X}}_{t}^k$ at the server satisfies $(\epsilon,\delta)$-DP. 
    \item  Compression/communication-efficiency : The estimation $\hat{\mathbf{X}}_{t+\tau}^{k}$ from client $k$ to the server should be represented by finite bits per sample.
    \item Universal source : the scheme should operate reliably irrespective of the distribution of $\mathbf{X}_t^k$ and without prior knowledge of it. 
    \item Adaptable noise : The noise $Z$ in the privacy mechanism is customizable according to the required accuracy level and privacy protection.
\end{enumerate}

\subsubsection{\textbf{Threat model}} \label{sec:threatM}
We adopt trusted aggregator model, i.e., the server is trusted.
Additionally, we assume that there are separate sources of shared randomness between each client and the server to perform quantization. 
Clients engaged in the FL framework are assumed to be honest yet inquisitive, meaning they comply with the protocol but may attempt to deduce sensitive client information from the average updates received by the server. 
Our goals are:
\begin{itemize}
    \item Protecting the privacy of each client's local dataset from other clients, as the updated model between rounds may inadvertently disclose sensitive information.
    \item Preventing privacy leaks from the final trained model upon completion of training, as it too may inadvertently reveal sensitive information. 
    This ensures the relevance of our solution in scenarios where clients are trusted, and the final trained model may be publicly released to third parties.
\end{itemize}

\section{CEPAM for FL} \label{sec:cepam}

In this section, we introduce CEPAM, which utilizes the inherent randomness in probabilistic quantized FL to provide privacy enhancement. 
(see Algorithm~\ref{alg:UnivPriv}).
Our approach modifies the schemes in  \cite{shlezinger2020uveqfed} and \cite{hasircioglu2023communication} by replacing universal quantization  \cite{ziv1985universal,zamir1992universal} with RSUQ  as outlined in Section~\ref{sec:RSUQ}. 
Through the utilization of RSUQ and the layered construction \cite{hegazy2022randomized,wilson2000layered}, leveraging on the unique quantization error property of RSUQ (refer to Proposition~\ref{prop:LRSUQ_error}), we construct a joint privacy and compression mechanism to address both FL challenges of communication overload and privacy requirement adaptable to any system adjustment simultaneously.   
This is achieved by applying norm clipping and then LRSUQ to the model updates $\mathbf{X}_{t+\tau}^k$ at the end of each FL round, separately for each client.
These quantized updates are then transmitted as a set of messages $\{\mathbf{M}_{j}^k\}_{j \in \mathcal{N}}$ to the server. 
The server adds the dithers induced by the shared randomness to the received messages, collects the decoded messages and obtains the estimations $\hat{\mathbf{X}}_t^k$. 
Subsequently, the server computes the average of estimated model updates $\frac{1}{K}\sum_{k \in \mathcal{K}}\hat{\mathbf{X}}_{t}^k$, which ensures the DP guarantee against the honest but inquisitive clients.
The performance of CEPAM will be analyzed in Section~\ref{sec:perf_analysis}. 


\begin{algorithm}[ht]
\caption{CEPAM} \label{alg:UnivPriv}
\begin{algorithmic}[1]
\State \textbf{Inputs:} 
Number of total iterations $T$, number of local iterations $\tau$, number of clients $K$, local datasets $\{\mathcal{D}^{(k)}\}_{k \in \mathcal{K}}$, loss function $\ell(\cdot,\cdot)$, threshold $\gamma>0$ 
\State \textbf{Output:} Global optimized model $\mathbf{W}_T$
\State \textbf{Initialization:} Client $k$ and the server agree on privacy budget $\epsilon>0$ and privacy relaxation $\delta$ for $(\epsilon,\delta)$-DP (or privacy budget $\epsilon>0$ for $\epsilon$-DP), shared seed $s_k$, lattice dimension $n$, generator matrix $\mathbf{G}$, 
privacy-preserving noise $\mathbf{Z}\sim f$ with noise variance $\mathrm{Var}(f)>0$, latent variable $U \sim g(u):=\mu(L_{u}^{+}(f))$, initial model parameter vector $\mathbf{W}_{0} \in \mathbb{R}^m$ 
\State \textbf{Protocol at client $k$:}
\For{$t+1 \notin \mathcal{T}_T$}
\State Receive $\mathbf{W}_{t}$ from the server or use $\mathbf{W}_0$ if $t=0$
\State Set $\mathbf{W}_{t}^k  \leftarrow \mathbf{W}_{t}$
\For{$t'=1$ \textbf{to} $\tau$}
\State Compute $\mathbf{W}_{t+t'}^{k} \leftarrow \mathbf{W}_{t+t'-1}^{k} - \eta_{t+t'-1} \nabla F_{k}^{j_{t+t'-1}^{k}}(\mathbf{W}_{t+t'-1}^{k})$
\EndFor
\State Compute $\mathbf{X}_{t+\tau}^{k} \leftarrow \mathbf{W}_{t+\tau}^{k}-\mathbf{W}_t$
\State Compute $\tilde{\mathbf{X}}_{t+\tau}^{k}\leftarrow\mathbf{X}_{t+\tau}^{k}/\max\{1,\Vert\mathbf{X}_{t+\tau}^{k}\Vert_2/\gamma\}$ 
\Comment{Perform norm clipping}
\State Run subroutine $\mathrm{ENCODE}(\tilde{\mathbf{X}}_{t+\tau}^{k}, s_k, (f, g), N)$
\Comment{See Algorithm~\ref{alg:UnivPrivEnc}}
\State Send $\{(H_{t+\tau,j}^k,\mathbf{M}_{t+\tau,j}^k)\}_{j \in \mathcal{N}}$ to server, using $N\cdot\left(H(\mathrm{Geom}(p(U)\;|\;U)+H(\lceil \log |\mathcal{M}(U)|\rceil\;|\;U)\right)$ bits  
\EndFor 
\State \textbf{Protocol at the server:}
\For{$t+\tau \in \mathcal{T}_T$}
\State Receive $\{H_{t+\tau,j}^k, \mathbf{M}_{t+\tau,j}^k\}_{j \in \mathcal{N}}$ from clients
\For{$k \in \mathcal{K}$}
\State Run subroutine $\mathrm{DECODE}(\{(H_{t+\tau,j}^k,\mathbf{M}_{t+\tau,j}^k)\}_{j \in \mathcal{N}}, s_k, g, N)$ 
\Comment{See Algorithm~\ref{alg:UnivPrivDec}}
\EndFor
\State Compute  
$\hat{\mathbf{W}}_{t+\tau} \leftarrow \mathbf{W}_{t} +  \sum_{k \in \mathcal{K}} p_{k}\hat{\mathbf{X}}_{t+\tau}^{k}$
\State Set $\mathbf{W}_{t+\tau} \leftarrow \hat{\mathbf{W}}_{t+\tau}$ and broadcast $\mathbf{W}_{t+\tau}$ to all clients, or output $\mathbf{W}_T$ if $t+\tau=T$
\EndFor
\end{algorithmic}
\end{algorithm}


CEPAM consists of three stages: initialization, encoding at client, and decoding at server. 
The initialization stage involving both the clients and the server takes place before the start of the FL procedure, while the encoding and decoding stages are independently carried out by the clients and the server.
Each client is presumed to execute the same encoding function, ensuring consistency across all clients. 
Thus, we focus on the $k$-client, and the details of each stage are described as follows.

\emph{1) Initialization}: 
At the outset, client $k$ and the server agree on the privacy budget $\epsilon$ and privacy relaxation  $\delta$ for $(\epsilon,\delta)$ DP (or $\epsilon$ for $\epsilon$-DP) in accordance with the system requirements, along with the parameters for LRSUQ as specified in Section~\ref{sec:RSUQ}.
The latter involves sharing a small random seed $s_k$ between client $k$ and the server to serve as a source of common randomness while fixing the lattice dimension $n$ and a lattice generator matrix $\mathbf{G}$. 
Both client $k$ and the server use the same random seed to initialize their respective PRNGs $\mathfrak{P}$ and $\tilde{\mathfrak{P}}$, ensuring that the outputs of the two PRNGs remain identical.

\emph{2) Client}: 
At the end of each FL round of local training, when the model update $\mathbf{X}_{t+\tau}^{k} \in \mathbb{R}^m$ is ready for transmission through the binary uplink channel to the server,
client $k$ executes the encoding process. The update $\mathbf{X}_{t+\tau}^{k}$ is encoded into finite bit representations by a combination of LRSUQ and the entropy coding, which also jointly ensures the privacy guarantee with the decoding step at the server.
The encoding algorithm is summarized in Algorithm~\ref{alg:UnivPrivEnc}.

\begin{algorithm}[ht]
\caption{$\mathrm{ENCODE}(\tilde{\mathbf{X}}_{t+\tau}^{k},
s_k, (f, g), N)$} \label{alg:UnivPrivEnc}
\begin{algorithmic}[1]
\State \textbf{Inputs:} 
Clipped 
model update vector $\tilde{\mathbf{X}}_{t+\tau}^{k}$,
random seed $s_k$, pair of pdfs $(f,g)$, number of sub-vectors $N$
\State \textbf{Output:} Set of messages $\{(H_{t+\tau,j}^k,\mathbf{M}_{t+\tau,j}^k)\}_{j \in \mathcal{N}}$
\State Partition $\tilde{\mathbf{X}}_{t+\tau}^{k}$ to $\{\tilde{\mathbf{X}}_{t+\tau,j}^{k} \}_{j \in \mathcal{N}}$
\For{$j=1$ \textbf{to} $N$}
\State Sample $U_{t+\tau,j}^k \sim g$ by  $\mathfrak{P}^k$ \Comment{Initiated by seed $s_k$}
\For{$i=1, 2, \ldots$} 
\Comment{Perform RSUQ}
\State \label{step:RS1} Sample 
$\mathbf{V}_{t+\tau,j,i}^k \sim\mathrm{Unif}(\mathcal{P})$ by $\mathfrak{P}^k$
\Comment{Initiated by seed $s_k$}
\State Find unique $\mathbf{M}_{t+\tau,j}^k \leftarrow Q_{\mathcal{P}}(\tilde{\mathbf{X}}_{t+\tau,j}^{k}/\beta(U_{t+\tau,j}^k)-\mathbf{V}_{t+\tau,j,i}^k)\in\mathbf{G}\mathbb{Z}^{n}
$ \Comment{Perform encoding}
\State \label{step:RS2} Check if $\beta(U_{t+\tau,j}^k)\cdot(\mathbf{M}_{t+\tau,j}^k+\mathbf{V}_{t+\tau,j,i}^k)-\tilde{\mathbf{X}}_{t+\tau,j}^{k}\in L_{U_{t+\tau,j}^k}^{+}(f)$ 
\If{Yes} \Comment{Perform rejection sampling}
\State $H_{t+\tau,j}^k \leftarrow i$;
 Return $(H_{t+\tau,j}^k,\mathbf{M}_{t+\tau,j}^k)$
\Else
\State Reject $i$ and repeat Step~\ref{step:RS1}-\ref{step:RS2} with $i+1$
\EndIf
\EndFor
\EndFor
\State \textbf{return} $\{(H_{t+\tau,j}^k,\mathbf{M}_{t+\tau,j}^k)\}_{j \in \mathcal{N}}$ 
\end{algorithmic}    
\end{algorithm}


\textbf{Quantization:} 
The $k$-th client clips the update $\mathbf{X}_{t+\tau}^{k}$ by a real number $\gamma > 0$ to $\tilde{\mathbf{X}}_{t+\tau}^{k}$ (so that the $2$-norm of $\tilde{\mathbf{X}}_{t+\tau}^{k}$ is bounded by $\gamma$), and then divides $\tilde{\mathbf{X}}_{t+\tau}^{k}$ into $N:=\left\lceil \frac{m}{n} \right\rceil$ distinct $n$-dimensional sub-vectors $\tilde{\mathbf{X}}_{t+\tau,j}^{k} \in \mathbb{R}^n$, $j \in \{1, \ldots, N\}=:
\mathcal{N}$.

Instead of using the universal quantization, we replace it by LRSUQ
(see Definition~\ref{def:rej_samp_quant_layer}).
Let $\mathbf{Z}\sim f$ denote the the intended privacy-preserving noise random vector with  mean $\boldsymbol{\mu}:=\mathbb{E}[\mathbf{Z}]=\mathbf{0}$ and 
noise variance $\mathrm{Var}(f):=\mathrm{Var}(\mathbf{Z})$, and let $g(u):=\mu(L_{u}^{+}(f))$ be the pdf of some latent variable.
To carry out LRSUQ at client $k$, the encoder 
at the client observes for $j \in \mathcal{N}$, $\tilde{\mathbf{X}}_{t+\tau,j}^{k}\in\mathbb{R}^{n}$, 
generates $\mathbf{V}_{t+\tau, j,1}^k,\mathbf{V}_{t+\tau, j,2}^k,\ldots\stackrel{iid}{\sim}\mathrm{Unif}(\mathcal{P})$ and $U_{t+\tau,j}^k \sim g$ by the PRNG $\mathfrak{P}^k$ until iteration $H_{t+\tau,j}^k$ satisfying $\beta(U_{t+\tau,j}^k)\cdot\big(Q_{\mathcal{P}}(\tilde{\mathbf{X}}_{t+\tau,j}^{k}/\beta(U_{t+\tau,j}^k)-\mathbf{V}_{H_{t+\tau,j}^k})+\mathbf{V}_{H_{t+\tau,j}^k}\big)-\tilde{\mathbf{X}}_{t+\tau,j}^{k}\in L_{U_{t+\tau,j}^k}^{+}(f)$,
computes $\mathbf{M}_{t+\tau,j}^k:=Q_{\mathcal{P}}(\tilde{\mathbf{X}}_{t+\tau,j}^{k}/\beta(U_{t+\tau,j}^k)-\mathbf{V}_{H_{t+\tau,j}^k})\in\mathbf{G}\mathbb{Z}^{n}$,
and encodes and transmits $(H_{t+\tau,j}^k,\mathbf{M}_{t+\tau,j}^k)$.\footnote{By setting $U_{t+\tau,j}^k = \emptyset$, we could retrieve the simpler RSUQ as defined in Definition~\ref{def:rej_samp_quant}.}  

\emph{3) Server}: 
The server observes
$\{(H_{t+\tau,j}^k,\mathbf{M}_{t+\tau,j}^k)\}_{j \in \mathcal{N}}$ from client $k$.
Using the shared randomness between client $k$ and the server, i.e., the shared random seed $s_k$,  the same realizations of $\mathbf{V}_{H_{t+\tau,j}^k}$’s and $U_{t+\tau,j}^k$’s generated by client $k$ can also be obtained by the server.
More specifically, the decoder 
generates  $U_{t+\tau,j}^k \sim g$ and  $\mathbf{V}_{t+\tau,j,1},\ldots,\mathbf{V}_{t+\tau,j,H_{t+\tau,j}^k}\stackrel{iid}{\sim}\mathrm{Unif}(\mathcal{P})$ by the PRNG $\tilde{\mathfrak{P}}^k$, and outputs $\mathbf{Y}_{t+\tau,j}^k=\beta(U_{t+\tau,j}^k)(\mathbf{M}_{t+\tau,j}^k+\mathbf{V}_{H_{t+\tau,j}^k})$.
Subsequently, the decoder collects the sub-vectors $\{\mathbf{Y}_{t+\tau,j}^k\}_{j \in \mathcal{N}}$ into an 
estimated vector $\hat{\mathbf{X}}_{t+\tau}^{k}$. 
of the update $\mathbf{X}_{t+\tau}^{k}$. 
The decoding algorithm is summarized in Algorithm~\ref{alg:UnivPrivDec}.

\begin{algorithm}[t]
\caption{$\mathrm{DECODE}(\{(H_{t+\tau,j}^k,\mathbf{M}_{t+\tau,j}^k)\}_{j \in \mathcal{N}}, s_k, g, N)$} \label{alg:UnivPrivDec}
\begin{algorithmic}[1]
\State \textbf{Inputs:} Set of messages $\{(H_{t+\tau,j}^k,\mathbf{M}_{t+\tau,j}^k)\}_{j \in \mathcal{N}}$, 
random seed $s_k$, pdf $g$,  number of sub-vectors $N$
\State \textbf{Output:} Estimated model update $\hat{\mathbf{X}}_{t+\tau}^{k}$
\State Use $\tilde{\mathfrak{P}}^k$ to sample $U_{t+\tau,j}^k \sim g$  \Comment{Initiated by seed $s_k$}
\For{$j \in \mathcal{N}$}
\State Use $\tilde{\mathfrak{P}}^k$ to sample $\mathbf{V}_{t+\tau,j,1},\ldots,\mathbf{V}_{t+\tau,j,H_{t+\tau,j}^k}\stackrel{iid}{\sim}\mathrm{Unif}(\mathcal{P})$ \Comment{Initiated by seed $s_k$}
\State \label{step:PP} Compute  $\mathbf{Y}_{t+\tau,j}^k\leftarrow\beta(U_{t+\tau,j}^k)(\mathbf{M}_{t+\tau,j}^k+\mathbf{V}_{H_{t+\tau,j}^k})$ \Comment{Perform decoding, inducing privacy protection}
\EndFor
\State Collect $\{\mathbf{Y}_{t+\tau,j}^k\}_{j \in \mathcal{N}}$ into  $\mathbf{Y}_{t+\tau}^k$ and set  $\hat{\mathbf{X}}_{t+\tau}^{k}\leftarrow 
\mathbf{Y}_{t+\tau}^k$ 
\State \textbf{return} $\hat{\mathbf{X}}_{t+\tau}^{k}$
\end{algorithmic}
\end{algorithm}


\textbf{Privacy enhancement}: 
The privacy is ensured by LRSUQ executed cooperatively between the clients and the server in compliance with the threat model and the defined privacy guarantee.  
Through the decoding process, it is ensured that $\{\mathbf{Y}_{t+\tau,j}^k\}_{j \in \mathcal{N}}$ comprises noisy estimates of $\{\tilde{\mathbf{X}}_{t+\tau,j}^{k}\}_{j \in \mathcal{N}}$, i.e., the estimated model updates at the server are noisy estimates of the clipped model updates at the clients, thus establishing a privacy mechanism.
Refer to Step~\ref{step:PP} in Algorithm~\ref{alg:UnivPrivDec}.
Specifically, 
in Section~\ref{sec:GaussMec}, we prove that when using the Gaussian mechanism, the global average of estimated model updates satisfies $(\epsilon,\delta)$-DP requirement by specifying appropriate $f$ and $g$, while in Section~\ref{sec:LapMech}, when using the Laplace mechanism, the estimated model updates satisfies $\epsilon$-DP requirement by specifying appropriate $f$ and $g$.

\section{Performance Analysis} \label{sec:perf_analysis}

In this section, we study the performance of CEPAM. 
This includes characterizing its privacy guarantees and compression capabilities, 
followed by an exploration of its distortion bounds and convergence analysis. 
We begin by stating a useful lemma for the subsequent analysis. 
The proof is given in Appendix~\ref{subsec:pf_LRSUQ_error}.
\smallskip
\begin{lem} \label{lem:LRSUQ_error}
The LRSUQ quantization errors $\{\tilde{\mathbf{Z}}_{t+\tau,j}^k:= \mathbf{Y}_{t+\tau,j}^k - \tilde{\mathbf{X}}_{t+\tau,j}^{k}\}_{j \in \mathcal{N}}$ are iid (over $k$ and $j$),  follows the pdf $f$, and independent of $\tilde{\mathbf{X}}_{t+\tau,j}^{k}$. 
\end{lem}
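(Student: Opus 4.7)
The plan is to reduce the claim to Proposition~\ref{prop:LRSUQ_error} for each individual $(k,j)$ pair, and then upgrade from the marginal statement to the joint iid statement by exploiting the independence of the randomness sources assigned to distinct pairs.

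First, I would note that by construction in Algorithms~\ref{alg:UnivPrivEnc} and~\ref{alg:UnivPrivDec}, the decoded output
$\mathbf{Y}_{t+\tau,j}^{k}=\beta(U_{t+\tau,j}^{k})\bigl(\mathbf{M}_{t+\tau,j}^{k}+\mathbf{V}_{t+\tau,j,H_{t+\tau,j}^{k}}^{k}\bigr)$
is precisely $Q_{f,\mathcal{P}}(\tilde{\mathbf{X}}_{t+\tau,j}^{k},U_{t+\tau,j}^{k},(\mathbf{V}_{t+\tau,j,i}^{k})_{i})$ in the sense of Definition~\ref{def:rej_samp_quant_layer}. Hence Proposition~\ref{prop:LRSUQ_error} directly gives, for every fixed pair $(k,j)$, that $\tilde{\mathbf{Z}}_{t+\tau,j}^{k}\sim f$ and is independent of $\tilde{\mathbf{X}}_{t+\tau,j}^{k}$.

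Second, I would argue that the randomness blocks $R_{k,j}:=(U_{t+\tau,j}^{k},(\mathbf{V}_{t+\tau,j,i}^{k})_{i})$ are mutually independent across $(k,j)$, and are independent of the SGD randomness (and hence of all inputs $\{\tilde{\mathbf{X}}_{t+\tau,j}^{k}\}_{k,j}$). This holds because the PRNGs $\mathfrak{P}^{k}$ are initialized by independent seeds $\{s_k\}_{k\in\mathcal{K}}$ shared only with the server, and within each client the samples used for different sub-vectors $j$ are drawn from disjoint segments of the PRNG stream, producing mutually independent uniform dithers and latent variables.

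Finally, I would conclude by conditioning on the full collection of inputs $\{\tilde{\mathbf{X}}_{t+\tau,j}^{k}\}_{(k,j)}$. Given any value of these inputs, each error $\tilde{\mathbf{Z}}_{t+\tau,j}^{k}$ is a deterministic function of $R_{k,j}$ alone; by Step~1 its conditional distribution is $f$ irrespective of the value of $\tilde{\mathbf{X}}_{t+\tau,j}^{k}$, and by Step~2 the blocks $\{R_{k,j}\}_{(k,j)}$ are conditionally independent. Therefore the errors are conditionally iid $\sim f$ given the inputs, which is equivalent to being unconditionally iid with pdf $f$ and jointly independent of $\{\tilde{\mathbf{X}}_{t+\tau,j}^{k}\}_{(k,j)}$. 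The main obstacle is bookkeeping the two layers of randomness (data/SGD versus PRNG-generated dithers and latent variables) so that the per-pair conclusion of Proposition~\ref{prop:LRSUQ_error} lifts cleanly to the joint iid statement; once independence of the $R_{k,j}$'s from each other and from the inputs is established, the rest is a short conditioning argument requiring no new probabilistic machinery.
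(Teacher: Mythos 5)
Your proposal is correct and follows essentially the same route as the paper: both reduce the claim to Proposition~\ref{prop:LRSUQ_error} applied to each sub-vector $\tilde{\mathbf{X}}_{t+\tau,j}^{k}$. The only difference is that you spell out the independence of the per-pair randomness blocks and the conditioning argument needed to lift the marginal statement to the joint iid one, a bookkeeping step the paper's proof leaves implicit.
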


\subsection{Privacy} \label{sec:priv_anal} 

By using LRSUQ, we can construct privacy mechanisms that satisfy various privacy requirements by customizing the pair $(f,g)$.   
In the following subsections, we present examples of $(f, g)$ pairs to illustrate the construction of Gaussian and Laplace mechanisms, along with demonstrating the associated privacy guarantees.
For simplicity, we assume that $p_k = 1/K$. 

\subsubsection{Gaussian mechanism} \label{sec:GaussMec}

If $g$ follows a chi-squared distribution with $n+2$ degrees of freedom, i.e., $g \sim \chi_{n+2}^2$, 
then $f$ follows a Gaussian distribution.
This guarantees that the global average of estimated model updates satisfies $(\epsilon, \delta)$-DP. 
The proof is given in Appendix~\ref{subsec:pf_Gaussian_Priv}.

\smallskip
\begin{thm} \label{thm:Gaussian_Priv}
Set  $\tilde{U}_{t+\tau,j}^k \sim g = \chi_{n+2}^2$ and $\tilde{\mathbf{Z}}_{t+\tau,j}^k|\{\tilde{U}_{t+\tau,j}^k=u\} \sim \mathrm{Unif}(\sigma \sqrt{u} B^{n})$ for every $k, j$, 
the resulting mechanism, i.e., Step~\ref{step:PP} in Algorithm~\ref{alg:UnivPrivDec},
is Gaussian.
The average $\frac{1}{K} \sum_{k \in \mathcal{K}} \hat{\mathbf{X}}_{t+\tau}^{k}$ is a noisy estimate of the average of the clipped
model updates such that    
\begin{equation} \label{eq:Gaussian_noisy_eq}
    \frac{1}{K} \sum_{k \in \mathcal{K}} \hat{\mathbf{X}}_{t+\tau}^{k}= \frac{1}{K}\sum_{k \in \mathcal{K}}\tilde{\mathbf{X}}_{t+\tau}^{k}+ \mathcal{N}\bigg(\mathbf{0},\frac{\sigma^2}{K}\mathbf{I}_{m}\bigg),
\end{equation}
where $\mathbf{I}_m$ is the $m \times m$ identity matrix. 
Hence, for every $\tilde{\epsilon} >0$, $\frac{1}{K} \sum_{k \in \mathcal{K}} \hat{\mathbf{X}}_{t+\tau}^{k}$ satisfies $(\epsilon,\delta)$-DP in one round against clients for  $\epsilon = \log \left(1 + p(e^{\tilde{\epsilon}} - 1)\right)$ where $p = 1- \left(1-\frac{1}{|\mathcal{D}^{(k)}|}\right)^{\tau}$ and 
\begin{equation} \label{eq:PrivAmpGSRFL}
    \delta = \sum_{j=1}^{\tau} \binom{\tau}{j}\bigg(\frac{1}{|\mathcal{D}^{(k)}|}\bigg)^j\bigg(1-\frac{1}{|\mathcal{D}^{(k)}|}\bigg)^{\tau-j}\frac{(e^{\tilde{\epsilon}}-1)}{e^{\tilde{\epsilon}/j}-1}\left(\Phi\bigg(\frac{\tau\gamma}{\sqrt{K}\sigma} - \frac{\sqrt{K}\tilde{\epsilon} \sigma}{2j\tau\gamma}\bigg) -  e^{\tilde{\epsilon}/j} \Phi\bigg(-\frac{\tau\gamma}{\sqrt{K}\sigma} - \frac{\sqrt{K}\tilde{\epsilon} \sigma}{2j\tau\gamma}\bigg)\right).
\end{equation}
\end{thm}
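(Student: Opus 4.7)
The plan is to decompose the theorem into three pieces: (i) verify that the prescribed pair $(f,g)$ is exactly the one whose LRSUQ error is Gaussian, (ii) lift this per-subvector Gaussian error to a per-client and averaged Gaussian error, and (iii) apply the privacy amplification lemma already in hand to obtain the stated $(\epsilon,\delta)$ bound.

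First I would pin down the Gaussian structure. By the fundamental theorem of simulation (the same device invoked to motivate LRSUQ in Section~\ref{sec:RSUQ}), if one generates $U\sim g$ and then $\mathbf{Z}\,|\,\{U=u\}\sim\mathrm{Unif}(L_u^+(f))$, then $\mathbf{Z}\sim f$. For $f=\mathcal{N}(\mathbf{0},\sigma^2\mathbf{I}_n)$, the superlevel sets are centered balls, so $L_u^+(f)=r(u)B^n$ for some radius $r(u)$; it therefore suffices to match the radial law. Plugging the conditional $\mathrm{Unif}(\sigma\sqrt{u}B^n)$ into the marginal
\[
f(\mathbf{z})=\int_{\|\mathbf{z}\|^2/\sigma^2}^{\infty}\frac{g(u)}{\mu_n(\sigma\sqrt{u}B^n)}\,du,
\]
using $\mu_n(rB^n)=\pi^{n/2}r^n/\Gamma(n/2+1)$, and simplifying shows that the $u^{n/2}$ factor in the $\chi^2_{n+2}$ pdf cancels the ball volume, leaving $f(\mathbf{z})\propto e^{-\|\mathbf{z}\|^2/(2\sigma^2)}$ up to the correct normalization. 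This identifies $g=\chi^2_{n+2}$ as exactly the right layer-weight pdf, and the superlevel-set containment hypothesis $L_u^+(f)\subseteq\beta(u)\mathcal{P}$ is ensured by choosing $\beta(u)=\sigma\sqrt{u}$ (or a suitable scalar multiple), so Definition~\ref{def:rej_samp_quant_layer} applies.

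Next I would lift to the per-client and averaged updates. By Lemma~\ref{lem:LRSUQ_error}, the sub-vector errors $\tilde{\mathbf{Z}}_{t+\tau,j}^k=\mathbf{Y}_{t+\tau,j}^k-\tilde{\mathbf{X}}_{t+\tau,j}^k$ are i.i.d.\ across $k,j$, have pdf $f=\mathcal{N}(\mathbf{0},\sigma^2\mathbf{I}_n)$ by step~(i), and are independent of the inputs. Concatenating the $N=\lceil m/n\rceil$ sub-vectors of client $k$ gives $\hat{\mathbf{X}}_{t+\tau}^k-\tilde{\mathbf{X}}_{t+\tau}^k\sim\mathcal{N}(\mathbf{0},\sigma^2\mathbf{I}_m)$, independent across clients. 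Averaging over $k\in\mathcal{K}$ then gives \eqref{eq:Gaussian_noisy_eq}, because the sum of $K$ independent $\mathcal{N}(\mathbf{0},\sigma^2\mathbf{I}_m)$ vectors scaled by $1/K$ is $\mathcal{N}(\mathbf{0},(\sigma^2/K)\mathbf{I}_m)$.

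Finally I would cash in the privacy bound. Each client runs $\tau$ local SGD steps whose indices $j_t^k$ are drawn uniformly and independently from $\mathcal{D}^{(k)}$; this is precisely sampling with replacement $s_\tau$ with $n=|\mathcal{D}^{(k)}|$, as noted after Lemma~\ref{lem:subsampledGauss}. The base mechanism applied to the resulting multiset is the Gaussian mechanism, since by step~(ii) the averaged output equals $q\circ s_\tau(X)+\mathcal{N}(\mathbf{0},(\sigma^2/K)\mathbf{I}_m)$ where $q(\cdot)=\frac{1}{K}\sum_k \tilde{\mathbf{X}}_{t+\tau}^k$. Clipping forces the per-client contribution to lie in a ball of radius $\gamma$, and tracking this through the $\tau$ local steps and the $1/K$ averaging yields the $\ell_2$-sensitivity $\Delta_2=2\tau\gamma/K$ of $q\circ s_\tau$ under substitute-one on $\mathcal{D}^{(k)}$. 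Substituting $\Delta_2=2\tau\gamma/K$ and $\tilde{\sigma}=\sigma/\sqrt{K}$ into inequality~\eqref{eq:PrivAmplGSR} of Lemma~\ref{lem:subsampledGauss} then produces \eqref{eq:PrivAmpGSRFL}, completing the claim.

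I expect the main obstacle to be step~(iii), specifically justifying $\Delta_2=2\tau\gamma/K$: the group-privacy analysis in Lemma~\ref{lem:subsampledGauss} controls events in which the substituted datum is drawn $j$ times over the $\tau$ iterations, and one must argue that each additional draw can shift the clipped, aggregated update by at most an extra $2\gamma/K$ in $\ell_2$ — which is what lets the $j$-fold group privacy term enter cleanly as a factor of $\tilde{\epsilon}/j$ with the \emph{unscaled} $\Delta_2$ rather than $j\Delta_2$. Every other ingredient (the mixture representation of the Gaussian, Lemma~\ref{lem:LRSUQ_error}, and additivity of independent Gaussians) is essentially bookkeeping once step~(i) is verified.
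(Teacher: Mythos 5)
Your proposal is correct and follows essentially the same route as the paper: establish that the $\chi^2_{n+2}$/uniform-ball mixture yields exactly $\mathcal{N}(\mathbf{0},\sigma^2\mathbf{I}_n)$ (the paper cites this as a known fact rather than computing the integral as you do), invoke Lemma~\ref{lem:LRSUQ_error} and Gaussian additivity to get \eqref{eq:Gaussian_noisy_eq}, and then plug $\Delta_2=2\tau\gamma/K$ and $\tilde{\sigma}=\sigma/\sqrt{K}$ into Lemma~\ref{lem:subsampledGauss} with the local-SGD index sampling viewed as subsampling with replacement. The sensitivity concern you flag is handled in the paper only by the informal observation that a single datum appears at most $\tau$ times across the $\tau$ iterations and each client's clipped update contributes at most $\gamma/K$ to the average.
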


From  Lemma~\ref{lem:subsampledGauss}, the privacy profile $\delta$ of the subsampled Gaussian mechanism (Equation~\eqref{eq:Gaussian_noisy_eq}) depends on the size $n$ of the dataset due to privacy amplification via subsampling with replacement, and on the global $\ell_2$-sensitivity $\Delta_2$. 
Note that the size of the dataset is $n=|\mathcal{D}^{(k)}|$ and the global $\ell_2$-sensitivity is at most $\frac{2\tau\gamma}{K}$.

\subsubsection{Laplace mechanism} \label{sec:LapMech}

If $g$ follows a Gamma distribution $\mathrm{Gamma}(2,1)$, 
    then $f$ follows a Laplace distribution.\footnote{ Our CEPAM can be extended to include multivariate Laplace distributions \cite{andres2013geo} or $t$-distributions \cite{Reimherr2019Elliptical}, which are  left for future work.}
 This guarantees that estimated model updates satisfy $\epsilon$-DP. 
 The proof is given in 
 Appendix~\ref{subsec:pf_Laplace_Priv}.
\begin{thm} \label{thm:Laplace_privy}
Set  $\tilde{U}_{t+\tau,j}^k \sim g = \mathrm{Gamma}(2,1)$  and $\tilde{Z}_{t+\tau,j}^k|\{\tilde{U}_{t+\tau,j}^k=u\} \sim \mathrm{Unif}((-bu , bu))$ for every $k$ and $j$, the resulting mechanism is Laplace.
The estimator $ \hat{\mathbf{X}}_{t+\tau}^{k}$ is a noisy estimate of the clipped model update $\tilde{\mathbf{X}}_{t+\tau}^{k}$ such that    
\begin{equation} \label{eq:Laplace_noisy_eq}
    \hat{\mathbf{X}}_{t+\tau}^{k}= \tilde{\mathbf{X}}_{t+\tau}^{k}+ \mathrm{Lap}\left(\mathbf{0},b\mathbf{I}_{m}\right),
\end{equation}
where $\mathbf{I}_m$ is the $m \times m$ identity matrix. 
Hence, for every $ \hat{\mathbf{X}}_{t+\tau}^{k}$ satisfies $(\epsilon,0)$-DP in one round against clients for  $\epsilon = \log \left(1 + p(e^{\tilde{\epsilon}} - 1)\right)$ where $p = 1- \left(1-\frac{1}{|\mathcal{D}^{(k)}|}\right)^{\tau}$, and $\delta=0$ 
provided that $\tilde{\epsilon} \ge 2\tau\gamma/b$.
\end{thm}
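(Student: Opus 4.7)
The plan is to prove Theorem~\ref{thm:Laplace_privy} in two stages: first, establish that the chosen $(f,g)$ pair produces the Laplace noise claimed in Equation~\eqref{eq:Laplace_noisy_eq}; second, derive the $(\epsilon,0)$-DP guarantee by combining the pure-DP property of the Laplace mechanism with the subsampling amplification Theorem~\ref{thm:PrivAmpl}. The approach mirrors the Gaussian case (Theorem~\ref{thm:Gaussian_Priv}) but is actually simpler, since the Laplace mechanism yields $\delta=0$ whenever the scale dominates the sensitivity, obviating the need for the cdf expression in Lemma~\ref{lem:subsampledGauss}.

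For the first stage, I would verify that when $f$ is the 1D Laplace density $f(z)=\frac{1}{2b}e^{-|z|/b}$, the superlevel set $L_{u}^{+}(f)$ is the symmetric interval of Lebesgue measure $2b\log(1/(2bu))$ on $(0,1/(2b)]$. A change of variables $u \mapsto \tilde{u}$ with $u=e^{-\tilde{u}}/(2b)$ transforms the pdf $f_U(u)=\mu(L_u^+(f))$ into the $\mathrm{Gamma}(2,1)$ density in $\tilde{u}$, and sends $L_u^+(f)$ to the interval $(-b\tilde{u},b\tilde{u})$. This justifies the parameterization in the theorem statement and identifies it as an instance of LRSUQ per Definition~\ref{def:rej_samp_quant_layer}. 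Invoking Lemma~\ref{lem:LRSUQ_error} then gives that each scalar quantization error is iid $\mathrm{Lap}(0,b)$ and independent of the input; stacking across coordinates of $\tilde{\mathbf{X}}_{t+\tau}^{k}\in\mathbb{R}^m$ yields Equation~\eqref{eq:Laplace_noisy_eq}.

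For the second stage, the mechanism is the composition of (i) a subsampling step implicit in the $\tau$ iid uniform draws from $\mathcal{D}^{(k)}$ (equivalently, subsampling with replacement into a multiset of size $\tau$), and (ii) the Laplace mechanism applied to $\tilde{\mathbf{X}}_{t+\tau}^{k}$ viewed as a function $q$ of that multiset. I would bound the global $\ell_1$-sensitivity of $q$ by $2\tau\gamma$, using the $\ell_2$-clipping bound $\|\tilde{\mathbf{X}}_{t+\tau}^{k}\|_2\le\gamma$ together with the worst-case $\tau$-fold influence of a single dataset entry across the local SGD iterations. By the Laplace privacy profile reviewed just above the statement of this theorem, the base mechanism is then $\tilde{\epsilon}$-DP with $\delta_{\mathcal{F}}(\tilde{\epsilon})=0$ provided $\tilde{\epsilon}\ge 2\tau\gamma/b$, and standard group privacy gives $\delta_{\mathcal{F},j}(\tilde{\epsilon})=0$ for all $j\ge 1$ under the same condition. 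Plugging these identically-zero group privacy profiles into Theorem~\ref{thm:PrivAmpl} makes every term on the right-hand side of \eqref{eq:privAmpli_S} vanish, yielding $\delta=0$ and $\epsilon=\log(1+p(e^{\tilde{\epsilon}}-1))$ with $p=1-(1-1/|\mathcal{D}^{(k)}|)^{\tau}$, exactly as claimed.

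The main obstacle is justifying the $\ell_1$-sensitivity bound $2\tau\gamma$. Naively, clipping the cumulative update to $\ell_2$-norm $\gamma$ gives only $\|\tilde{\mathbf{X}}_{t+\tau}^{k}-\tilde{\mathbf{X}'}_{t+\tau}^{k}\|_2\le 2\gamma$, and translating between $\ell_2$ and $\ell_1$ norms typically costs a factor of $\sqrt{m}$ rather than $\tau$; the $\tau$ factor must therefore arise from accounting for how a single substitution in $\mathcal{D}^{(k)}$ propagates through all $\tau$ local iterations before the final clipping, and a careful worst-case argument over the SGD trajectory is needed to make this tight. The remaining steps, namely the mixture representation of Laplace and the direct application of Theorem~\ref{thm:PrivAmpl} once $\delta=0$ is ensured, are routine in comparison.
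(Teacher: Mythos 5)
Your proposal follows essentially the same route as the paper's proof: establish the Laplace noise via the mixture-of-uniforms representation (the paper's Lemma~\ref{lem:mixtLap}, which you derive explicitly via the superlevel-set change of variables) together with Lemma~\ref{lem:LRSUQ_error}, then obtain $(\epsilon,0)$-DP by feeding the identically-zero group privacy profile of the Laplace base mechanism (valid once $\tilde{\epsilon}\ge\Delta_1/b$ with $\Delta_1\le 2\tau\gamma$) into Theorem~\ref{thm:PrivAmpl}. The paper itself omits these details, stating only that the argument parallels Theorem~\ref{thm:Gaussian_Priv} and that $\Delta_1\le 2\tau\gamma$, so your write-up is if anything more explicit --- including your legitimate caveat that the $\ell_1$-sensitivity bound for an $\ell_2$-clipped vector update is asserted rather than derived, a gap the paper's own proof does not address either.
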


\subsection{Compression} 
 
Every client in CEPAM is required to transmit the set of message pairs $(H_{t+\tau,j}^k,\mathbf{M}_{t+\tau,j}^k)_{j\in \mathcal{N}}$ per communication round.
If we know that $\tilde{\mathbf{X}}_{t+\tau,j}^{k}\in \mathcal{X}$, 
then we can compress $H_{t+\tau,j}^k$ using the optimal prefix-free code \cite{Golomb1966enc,Gallager1975Geom} for  $\mathrm{Geom}(p(u))|\{U_{t+\tau,j}^k=u\}$, where $p(u):=\mu(L_{u}^{+}(f))/\mu(\beta(u)\mathcal{P})$, and compress $\mathbf{M}_{t+\tau,j}^k |\{U_{t+\tau,j}^k=u\} \in \mathcal{M} := (\mathcal{X}+L_{u}^{+}(f)-\beta(u)\mathcal{P}) \cap \mathbf{G}\mathbb{Z}^{n}
$ using $H(\lceil \log |\mathcal{M}(U)|\rceil\;|\;U)$ bits. 
Therefore, the total communication cost per communication round per client is at most $ N\cdot \left(H(\mathrm{Geom}(p(U)\;|\;U)+H(\lceil \log |\mathcal{M}(U)|\rceil\;|\;U)\right)$ bits.

\section{Numerical Evaluations} \label{sec:eval}

In this section, we evaluate the performance of  CEPAM.\footnote{The source code used in our numerical evaluations is available at \url{https://github.com/yokiwuuu/CEPAM.git}}  
We begin by detailing our experimental setup, which includes the types of datasets, model architectures, and training configurations in Section~\ref{sec:exp_setup}. 
Afterwards, we present comprehensive experimental results to demonstrate the effectiveness of CEPAM by comparing it to several baselines realizing the privacy protection and quantization in the FL framework 
in Section~\ref{sec:FLconv}.

We also compare the accuracy-privacy trade-off between CEPAM and the Gaussian-mechanism-then-quantize approach in Section~\ref{sec:PrivAccur}.
All experiments were executed on a server equipped with dual Intel Xeon Gold 6326 CPUs (48 cores and 96 threads in total), 256 GiB RAM, and two NVIDIA RTX A6000 GPUs (each with 48GB VRAM), running Ubuntu 22.04.5 LTS. The implementation was based on Python 3.13.5 and PyTorch 2.7.1+cu126, with CUDA 11.5 and NVIDIA driver version 565.57.01.

\subsection{Experimental Setup} \label{sec:exp_setup}

\subsubsection{Datasets}
We evaluate CEPAM on the standard image classification benchmark MNIST, which consists of $28 \times 28$ grayscale handwritten digits images divided into 60,000 training examples and 10,000 test examples.
The training examples and the test examples are equally distributed among $K=30$ clients. 
For simplicity, we set $p_k=1/K$ for $k \in \mathcal{K}$.

\subsubsection{Learning Architecture}
We evaluate CEPAM using two different neural network structures: a multi-layer perceptron (MLP) with two hidden layers and intermediate ReLU activations; 
and a convolutional neural network (CNN) composed of two convolutional layers followed by two fully-connected
ones, with intermediate ReLU activations and max-pooling layers. 
All two models use a softmax output layer. 
There are 6422 learnable parameters for CNN and 25818 learnable parameters for MLP.

\subsubsection{Baselines}
We compare CEPAM to other baselines:
\begin{itemize}
    \item FL: vanilla FL without any privacy or compression.
    \item FL+SDQ : FL with scalar SDQ-based compression, no privacy.
    \item FL+\{Gaussian, Laplace\}: FL with one-dimensional Gaussian or Laplace mechanism, no compression
    \item FL+\{Gaussian, Laplace\}+SDQ: FL with approach that applies one-dimensional Gaussian or Laplace mechanism followed by scalar SDQ.
    \item CEPAM-\{Gaussian, Laplace\}: CEPAM achieves privacy and quantization jointly through LRSUQ to construct Gaussian or Laplace mechanisms, namely CEPAM-Gaussian or CEPAM-Laplace. In particular, for CEPAM-Gaussian, we evaluate three different cases of LRSUQ that simulate Gaussian noise for dimension $n=1,2,3$, 
    where we use a scaled integer lattice $\alpha \mathbb{Z}^n$, where $\alpha = 10^{-5}$.\footnote{To simplify implementation, we choose to use the integer lattice due to its well-established decoding algorithm \cite{conway1982fast}. The time complexity of the rejection sampling step can be improved by using a lattice with higher packing density.} The corresponding basic cells $\alpha (-0.5,0.5]$, $\alpha (-0.5,0.5]^2$, and $\alpha (-0.5,0.5]^3$. 
    For CEPAM-Laplace using LRSUQ to simulate Laplace noise, we use the scaled integer lattice $\alpha\mathbb{Z}$ with the basic cell $(-0.5,0.5]$ for the same $\alpha = 10^{-5}$.
\end{itemize}

\subsubsection{Training Configurations}

We select the momentum SGD as the optimizer, where the momentum is set to $0.9$.  
The local iterations $\tau$ per communication round is set to 15.
While we set the initial learning rate to be $0.01$, we also implement an adaptive learning rate scheme that reduces the learning rate by a factor of $0.5$ when the validation accuracy plateaus for $10$ consecutive rounds.

\subsubsection{Repetition Strategy} For each set of parameters and baseline method, the training process was repeated 10 times with varying random seeds, and the average performance across these runs was reported.

\subsection{FL Convergence} \label{sec:FLconv}

We report the FL convergence in terms of accuracy for  CEPAM-Gaussian using MLP and CNN architectures and CEPAM-Laplace using CNN architecture on MNIST dataset.

\subsubsection{Gaussian Mechanism} \label{sec:FLconvGauss}

We set $\sigma=0.001$ and $\tilde{\epsilon}=5.9$ for the base Gaussian mechanisms in CEPAM-Gaussian for $n=1,2,3$ with variances $0.001^2$, $2\times0.001^2$, $3\times0.001^2$, respectively. 
By Theorem~\ref{thm:Gaussian_Priv}, all the clients' composite Gaussian mechanisms achieve $(\epsilon=1.45, \delta=2.48\times 10^{-2})$-DP. 
We also set $\sigma=0.001$ for both FL+Gaussian and FL+Gaussian+SDQ.

Figure~\ref{fig:CEPAM-Gaussian} shows the validation accuracy of CEPAM-Gaussian over communication rounds using MLP and CNN model, respectively. 
Note that for MLP, the cases of CEPAM-Gaussian ($n=1,2,3$) consistently outperform and achieve higher accuracy than all the baselines. 
Similar trend can be observed for CNN.

\begin{figure}[ht]
    \centering
    \includegraphics[width=\linewidth]{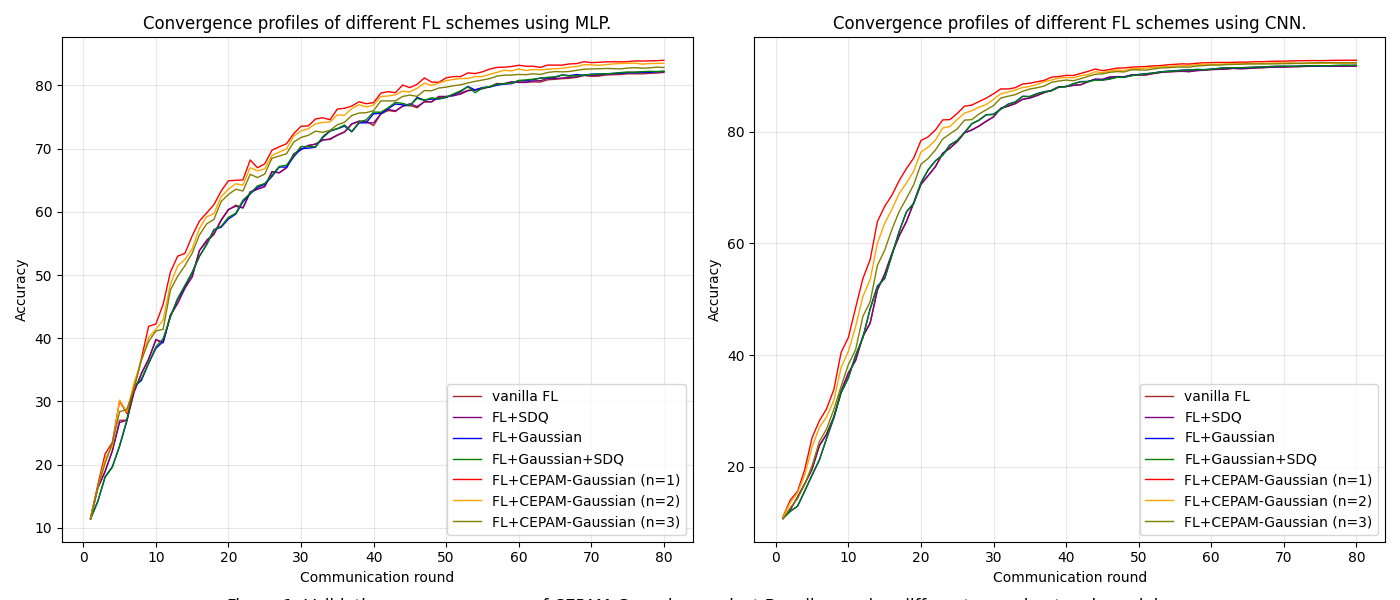}
    \caption{Convergence profile of different FL schemes for CEPAM-Gaussian.}
    \label{fig:CEPAM-Gaussian}
\end{figure}

In Table~\ref{table:AccGaussian}, we report the test accuracy of CEPAM-Gaussian for different  architectures (MLP and CNN) using MNIST with their 95\% confidence intervals. CEPAM-Gaussian demonstrates better performance, achieving an improvement of 0.6-2.0\% and 0.4-1.0\% in accuracy compared to other baselines for MLP and CNN, respectively, suggesting that CEPAM can be beneficial for various learning models.
Furthermore, all cases of CEPAM-Gaussian have approximately the same accuracy. 
This is because the added noise per dimension is consistent, i.e., $\mathrm{Var}(f)/n=\sigma^2$.
However, higher-dimensional LRSUQ has a better compression ratio than scalar counterparts   \cite{hasircioglu2023communication,yan2023layered} due to its nature as a vector quantizer.
In contrast, for baselines such as the Gaussian-mechanism-then-quantize approach (Gaussian+SDQ), the total noise comprises privacy noise and quantization error, 
leading to more distortion and consequently lower accuracy compared to LRSUQ.
It is worth noting that adding a minor level of distortion as a regularizer during training deep models can potentially enhance the performance of the converged model \cite{An1996AddingNoise}.
This finding is corroborated in our experimental results, where CEPAM-Gaussian achieves slightly better accuracy than vanilla FL for both architectures.


\begin{table}[ht]
\centering
\renewcommand{\arraystretch}{0.8} 
\begin{tabular}{ccc}
\toprule
Baselines   &  MLP (\%)  & CNN (\%) \\
\midrule
FL          &  $82.06 \pm 1.35$ & $91.70 \pm 0.66$ \\
FL+SDQ      &  $82.17 \pm 1.31$ & $91.73 \pm 0.68$ \\
FL+Gaussian &  $82.23 \pm 1.23$ & $91.84 \pm 0.72$ \\
FL+Gaussian+SDQ  & $82.26 \pm 1.20$ & $91.84 \pm 0.70$ \\
CEPAM-Gaussian ($n=1$) & $83.99 \pm 0.92$ & $92.74 \pm 0.70$ \\
CEPAM-Gaussian ($n=2$) & $83.52 \pm 1.00$ & $92.26 \pm 0.64$ \\
CEPAM-Gaussian ($n=3$) & $82.84 \pm 1.19$ & $92.29 \pm 0.72$ \\
\bottomrule
\end{tabular}
\vspace{-2mm}
\caption{Test Accuracy for MNIST}
\label{table:AccGaussian}
\end{table}

\subsubsection{Laplace Mechanism} \label{sec:FLconvLap}
 We set $b=0.001$ and $\tilde{\epsilon}=$30000 for the base Laplace mechanism in CEPAM-Laplace with a variance of $2 \times 0.001^2$, and the composite Laplace mechanism achieves ($\epsilon = 29995$)-DP. We also set $b=0.001$ for both FL+Laplace and FL+Laplace+SDQ. 
Note that a higher privacy budget is required to ensure $\epsilon$-DP for CEPAM-Laplace under privacy amplification with subsampling analysis.
However, we might achieve a lower privacy budget by relaxing $\delta$. 
The trade-off is left for the future study.

Figure~\ref{fig:CEPAM-Laplace} shows the validation accuracy of CEPAM-Laplace over communication rounds using the CNN models, respectively. 
Similar to CEPAM-Gaussian, we observe that CEPAM-Laplace consistently outperforms other baselines. 

\begin{figure}
    \centering
    \includegraphics[width=0.6\linewidth]{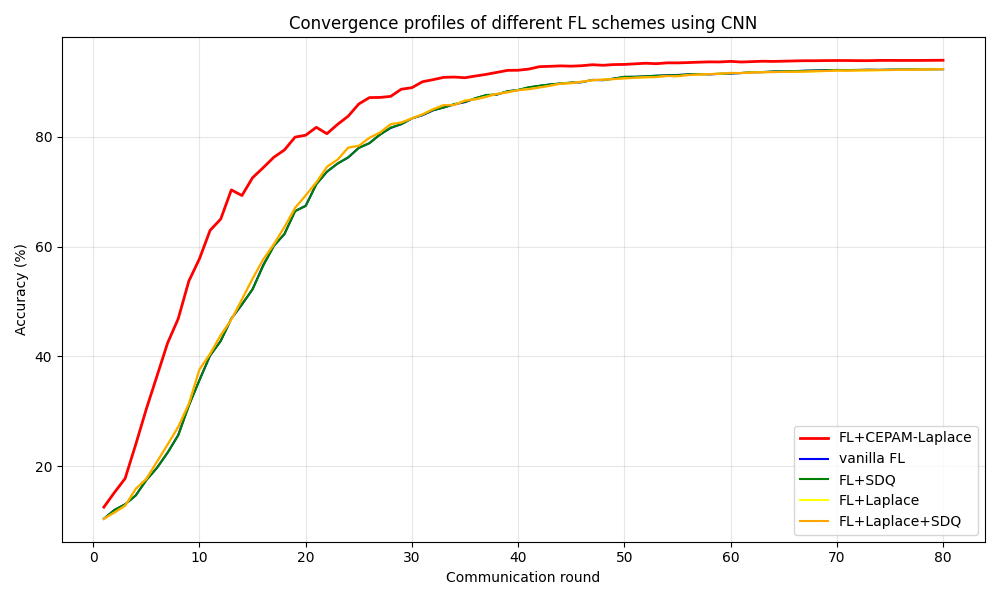}
    \caption{Convergence profiles of different FL schemes for CEPAM-Laplace.}
    \label{fig:CEPAM-Laplace}
\end{figure}

In Table~\ref{table:AccLaplace}, we also report the performance of test accuracy of CEPAM-Laplace. 
We observe that CEPAM-Laplace achieves 
an improvement of about 1.5-1.6\% in accuracy compared to other baselines for CNN, respectively, 
which suggests that CEPAM-Laplace can be beneficial for many learning models.

\begin{table}[ht]
\centering
\caption{Test Accuracy (Laplace) for MNIST}
\label{table:AccLaplace}
\begin{tabular}{cc}
\toprule
Baselines   & CNN (\%)\\
\midrule
FL          &  $92.92 \pm 0.61$\\
FL+SDQ      &  $92.93 \pm 0.62$ \\
FL+Laplace  &  $92.95 \pm 0.58$ \\
FL+Laplace+SDQ  & $92.99 \pm 0.57$ \\
CEPAM-Laplace   & $94.53 \pm 0.38$ \\
\bottomrule
\end{tabular}
\end{table}

\subsection{Privacy-Accuracy Trade-off} \label{sec:PrivAccur}

We evaluate the privacy-accuracy trade-off using two different values of privacy relaxation $\delta$. 
In the following, the parameters are computed according to Theorem~\ref{thm:Gaussian_Priv}.
For $\delta = 0.01$, we conducted the experiments by varying privacy budget $\epsilon$ from 1 to 10, the corresponding $\sigma$ are $\{0.1375, 0.0864, 0.0602, 0.0431, 0.0307, 0.0231, 0.0141, 0.0083, 0.00364, 0.000159\}$ respectively. Similarly, for $\delta = 0.015$, we conducted the experiments by varying privacy budget $\epsilon$ from 1 to 5, the corresponding $\sigma$ are $\{0.0857, 0.0445, 0.0236, 0.0101, 0.0005\}$ respectively. Again, for each set of parameters, the experiments are simulated 10 times, and the results are averaged.

Figure~\ref{fig:AccPriv} illustrates the trade-off between learning performance, measured in terms of test accuracy, and the privacy budget between CEPAM and the simple Gaussian-mechanism-then-quantize approach (Gaussian+SDQ) using a CNN architecture. 

Overall, CEPAM-Gaussian outperforms Gaussian+SDQ. 
The figure demonstrates that as more privacy budget is allocated, higher test accuracy can be achieved with CEPAM-Gaussian. 
However, for each $\delta$, there is a point of diminishing returns; once the privacy budget reaches a threshold. In particular, at $\epsilon \approx 7$ and $\epsilon \approx 4$ for $\delta = 0.01, 0.015$ respectively, the increase in test accuracy by further increasing the privacy budget becomes limited.
\begin{figure}[ht]
    \centering
    \includegraphics[width=0.6\textwidth]{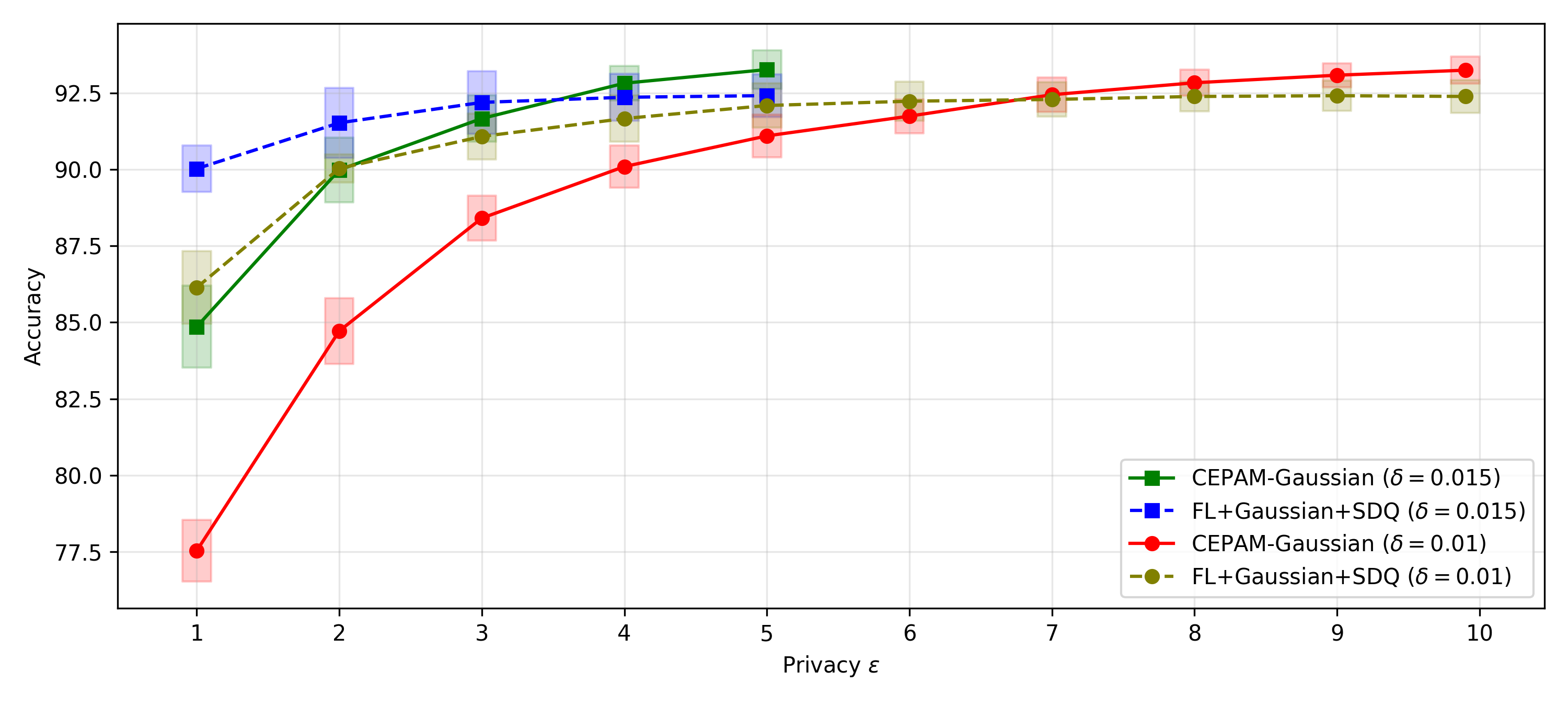} 
    \caption{Learning Accuracy and Privacy Trade-off using CNN}
    \label{fig:AccPriv}
\end{figure}

\section{Conclusion and Discussion}

In this paper, we introduced CEPAM, which can be used to achieved communication efficiency and privacy protection simultaneously in FL framework. 
We also proved the FL convergence bound for CEPAM and provided a privacy amplification analysis for CEPAM-Gaussian and CEPAM-Laplace. 
Furthermore, the experimental results confirmed our theoretical findings through a comparison of CEPAM with alternative methods using the MNIST dataset with MLP and CNN architectures. 
The experimental outcomes demonstrated that CEPAM-Gaussian and CEPAM-Laplace lead to a moderate enhancement in accuracy by 0.5-2.0\% and 1.5-2.7\%, respectively, when contrasted with other baseline approaches.

For potential future directions, it may be of interest to investigate the performance of CEPAM for the non-convex smooth objective function. 
Moreover, CEPAM could be expanded to incorporate other nonuniform continuous noise distributions, such as $t$-distributions or multivariate Laplace distributions, in order to provide alternative privacy mechanisms.

\bibliographystyle{IEEEtran}
\bibliography{ref}

\appendix

\section{Proofs for Section~\ref{sec:perf_analysis}}

\subsection{Proof of Lemma~\ref{lem:LRSUQ_error}\label{subsec:pf_LRSUQ_error}}
Since we are using LRSUQ for the encoding and  decoding of the $j$-th sub-vector $\tilde{\mathbf{X}}_{t+\tau,j}^{k}$ where $j \in \mathcal{N}$, Proposition~\ref{prop:LRSUQ_error} implies that, regardless of the statistical models of $\tilde{\mathbf{X}}_{t+\tau,j}^{k}$ where $j \in \mathcal{N}$, the quantization errors $\{\tilde{\mathbf{Z}}_{t+\tau,j}^k\}_{j \in \mathcal{N}}$ are iid (over $k$ and $j$) and follows the pdf $f$.

\subsection{Proof of Theorem~\ref{thm:Gaussian_Priv}\label{subsec:pf_Gaussian_Priv}}

Our proof relies on the following fact about the mixture of uniform distributions \cite{WalkerUniformP1999}. 

\begin{lem} \label{lem:mixtGaussian}
If $\mathbf{Z} \, |\, \{\tilde{U}=u\} \sim \mathrm{Unif}(\sigma \sqrt{u} B^{n})$ and $\tilde{U} \sim \chi^{2}_{n+2}$, then $\mathbf{Z} \sim \mathcal{N}(\mathbf{0},\sigma^2\mathbf{I}_{n})$.
\end{lem}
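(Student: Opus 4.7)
The plan is to prove Lemma~\ref{lem:mixtGaussian} by directly computing the marginal density of $\mathbf{Z}$ via the law of total probability $f_{\mathbf{Z}}(\mathbf{z}) = \int_0^\infty f_{\mathbf{Z}|\tilde{U}}(\mathbf{z}|u)\,f_{\tilde{U}}(u)\,du$ and checking that the result coincides with the Gaussian density. No fancier machinery (characteristic functions, rotational-symmetry arguments, etc.) is needed; the design of $n+2$ degrees of freedom for the mixing variable is precisely engineered so that the integral collapses.

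First I would write down the two ingredients. Since $\mathbf{Z}|\{\tilde{U}=u\}$ is uniform over the ball $\sigma\sqrt{u}\,B^n$ of volume $\mu(\sigma\sqrt{u}\,B^n) = \sigma^n u^{n/2} V_n$ with $V_n := \pi^{n/2}/\Gamma(n/2+1)$, the conditional density is
\[
f_{\mathbf{Z}|\tilde{U}=u}(\mathbf{z}) \;=\; \frac{1}{\sigma^n u^{n/2} V_n}\,\mathbb{1}\{\|\mathbf{z}\|\le \sigma\sqrt{u}\}.
\]
And the chi-squared pdf with $n+2$ degrees of freedom is
\[
f_{\tilde{U}}(u) \;=\; \frac{u^{(n+2)/2-1}\, e^{-u/2}}{2^{(n+2)/2}\,\Gamma((n+2)/2)} \;=\; \frac{u^{n/2}\, e^{-u/2}}{2^{(n+2)/2}\,\Gamma((n+2)/2)},
\]
where the key observation is that the power of $u$ in the numerator is exactly $n/2$, matching the $u^{n/2}$ in the denominator from the ball volume.

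Next I would carry out the integral. The indicator $\|\mathbf{z}\|\le \sigma\sqrt{u}$ rewrites the integration domain as $u \ge \|\mathbf{z}\|^2/\sigma^2$, and the $u^{n/2}$ factors cancel, leaving
\[
f_{\mathbf{Z}}(\mathbf{z}) \;=\; \frac{1}{\sigma^n V_n\, 2^{(n+2)/2}\,\Gamma((n+2)/2)}\,\int_{\|\mathbf{z}\|^2/\sigma^2}^{\infty} e^{-u/2}\,du \;=\; \frac{2\,e^{-\|\mathbf{z}\|^2/(2\sigma^2)}}{\sigma^n V_n\, 2^{(n+2)/2}\,\Gamma((n+2)/2)}.
\]
Finally I would simplify the constants. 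Using $\Gamma(n/2+1) = \Gamma((n+2)/2)$, the definition of $V_n$ yields $V_n\,\Gamma((n+2)/2) = \pi^{n/2}$, and combining $2/2^{(n+2)/2} = 2^{-n/2}$ gives $f_{\mathbf{Z}}(\mathbf{z}) = (2\pi\sigma^2)^{-n/2}\exp(-\|\mathbf{z}\|^2/(2\sigma^2))$, which is exactly the density of $\mathcal{N}(\mathbf{0},\sigma^2\mathbf{I}_n)$.

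There is no real obstacle here; the proof is essentially a one-line computation once the two densities are written down, and the only thing to track carefully is the constants (the identities $\Gamma((n+2)/2)=\Gamma(n/2+1)$ and $V_n\Gamma(n/2+1)=\pi^{n/2}$). As a sanity check and alternative perspective worth mentioning in a remark, the same identity can be read off from the LRSUQ framework: for the Gaussian pdf $f$, the superlevel sets $L_v^+(f)$ are balls, and reparametrizing the latent layer $V$ (with $V \sim \mu(L_V^+(f))$) by the squared radius $\tilde{U} := \|\cdot\|^2/\sigma^2$ pushes forward the layer distribution to exactly $\chi^2_{n+2}$, which is the natural structural reason $n+2$ (rather than $n$) degrees of freedom appear.
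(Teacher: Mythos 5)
Your computation is correct: the ball volume contributes $u^{-n/2}$, the $\chi^2_{n+2}$ density contributes $u^{n/2}e^{-u/2}$, the powers of $u$ cancel, and the surviving exponential integral together with the constant identities $V_n\Gamma(n/2+1)=\pi^{n/2}$ and $2\cdot 2^{-(n+2)/2}=2^{-n/2}$ yields exactly the $\mathcal{N}(\mathbf{0},\sigma^2\mathbf{I}_n)$ density. The paper itself offers no proof of this lemma --- it simply cites Walker's uniform scale-mixture representation of the Gaussian --- so your argument is precisely the standard derivation underlying that citation, and nothing further is needed.
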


In other words, given the noise vector $\mathbf{Z}$, which follows a zero-mean multivariate Gaussian distribution with a covariance matrix $\sigma^2\mathbf{I}_{n}$, where $\mathbf{I}_n$ is the $n \times n$ identity matrix, i.e., $\mathbf{Z} \sim \mathcal{N}(\mathbf{0}, \sigma^2\mathbf{I}_{n})$, $\mathbf{Z}$ can be expressed as a mixture of uniform distributions over $n$-dimensional balls, with the latent variable $\tilde{U}$ following a chi-squared distribution with $n+2$ degrees of freedom.
By setting $\tilde{U}_{t+\tau,j}^k \sim g = \chi_{n+2}^2$ for every $k$ and $\tilde{\mathbf{Z}}_{t+\tau,j}^k\;|\;\{\tilde{U}_{t+\tau,j}^k=u\} \sim \mathrm{Unif}(\sigma \sqrt{u} B^{n})$ for $k$ and $j$, Lemma~\ref{lem:LRSUQ_error} and Lemma~\ref{lem:mixtGaussian} imply that the resulting mechanism, i.e., Step~\ref{step:PP} in Algorithm~\ref{alg:UnivPrivDec}, is Gaussian. 

We utilize the subsampling with replacement mechanism to model the $\tau$-iterations of local SGD at $k$-th client,\footnote{According to Equation~\eqref{eq:local_sgd}, each $k$-th client performs $\tau$ iterations of local SGD with uniformly sampled indices from the dataset $\mathcal{D}^{(k)}$ (i.e.,  $p_j=1/|\mathcal{D}^{(k)}|=1/n$, where $j \in \{1,\ldots,n\}$ and $n=|\mathcal{D}^{(k)}|$ in~\eqref{eq:privAmpli_S} or~\eqref{eq:PrivAmplGSR}), this process can be regarded as subsampling with replacement, which outputs a multiset sample of size $\tau$ following a multinomial distribution $\mathrm{Mult}(\tau;p_1,\ldots,p_{n})$.} instead of employing Poisson subsampling for mini-batching \cite{Balle2018PrivAmpl} within a single local iteration as in \cite{hasircioglu2023communication}. 
Furthermore, the resulting subsampled Gaussian mechanism is applied to sub-vectors of dimension $n$ rather than to individual components.

For the $j$-th sub-vector in $\{\mathbf{Y}_{t+\tau,j}^k\}_{j \in \mathcal{N}}$ where $k \in \mathcal{K}$, we have $\tilde{\mathbf{Z}}_{t+\tau,j}^k \, |\, \{\tilde{U}_{t+\tau,j}^k=u\} \sim \mathrm{Unif}(\sigma \sqrt{u} B^{n})$. Since $\tilde{U}_{t+\tau,j}^k \sim \chi^{2}_{n+2}$, the noisy sub-vectors $\tilde{\mathbf{Z}}_{t+\tau,j}^k$ follows a Gaussian distribution $\mathcal{N}(\mathbf{0},\sigma^2\mathbf{I}_{n})$ by Lemma~\ref{lem:mixtGaussian}. 
Given that this is true for any client $k' \neq k$, the quantization noise of one client is Gaussian distributed from the perspective of any other client.
Considering the averaging operation at the central server, we obtain \eqref{eq:Gaussian_noisy_eq} by Lemma~\ref{lem:LRSUQ_error}, since sum of independent Gaussian random variables is a Gaussian random variable with appropriate parameters.

For the $(\epsilon, \delta)$-DP guarantee, we apply  Lemma~\ref{lem:subsampledGauss} on every $j$-th sub-vector $\hat{\mathbf{X}}_{t+\tau,j}^{k}$, which is the output of the subsampled Gaussian mechanism. 
Since the server averages sum of $\hat{\mathbf{X}}_{t+\tau,j}^{k}$'s over $K$ clients, the effect of a single data point $\xi \in \mathcal{D}^{(k)}$ is at most $\frac{\tau\gamma}{K}$.\footnote{Since there are $\tau$ iterations, any single data point $\xi \in \mathcal{D}^{(k)}$ can appear in the subsample at most $\tau$ times.}
Consequently, the $\ell_2$ sensitivity of $\frac{1}{K}\sum_{k \in \mathcal{K}}\mathbf{X}_{t+\tau,j}^{k}$ is at most $\frac{2\tau\gamma}{K}$.
Substituting $\Delta_{2} = \frac{2\tau\gamma}{K}$ in~\eqref{eq:PrivAmplGSR} and setting  $\tilde{\sigma}=\sigma/\sqrt{K}$, 
we obtain \eqref{eq:PrivAmpGSRFL}.


\subsection{Proof of Theorem~\ref{thm:Laplace_privy}\label{subsec:pf_Laplace_Priv}}

Our proof relies on the following fact about the mixture of uniform distributions.

\begin{lem} \label{lem:mixtLap}
If $Z \, |\, \{\tilde{U}=u\} \sim \mathrm{Unif}((-bu , bu))$ and $\tilde{U} \sim \mathrm{Gamma}(2,1)$, then $Z \sim \mathrm{Lap}(0,b)$.
\end{lem}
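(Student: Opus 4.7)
The plan is to establish the marginal distribution of $Z$ directly by integrating out the latent variable $\tilde{U}$, mirroring the structure of the Gaussian case in Lemma~\ref{lem:mixtGaussian}. This is the standard fundamental theorem of simulation argument: if we can write $Z$ as a mixture of uniform slabs whose widths are driven by a suitably chosen latent distribution, then the superlevel-set decomposition of the target pdf $f$ recovers $f$ itself.

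The key observation is that the Laplace density $f(z)=\tfrac{1}{2b}e^{-|z|/b}$ has superlevel sets that are symmetric intervals $(-bu,bu)$ whose Lebesgue measure is $2bu$, a linear function of $u$ (up to rescaling). Concretely, I would write
\[
f_Z(z)=\int_0^\infty f_{Z\mid\tilde U}(z\mid u)\,f_{\tilde U}(u)\,du,
\]
substitute $f_{Z\mid\tilde U}(z\mid u)=\tfrac{1}{2bu}\mathbf{1}\{|z|<bu\}$ and $f_{\tilde U}(u)=ue^{-u}\mathbf{1}\{u>0\}$ (the $\mathrm{Gamma}(2,1)$ density), and observe that the factor $u$ from the Gamma density exactly cancels the $1/u$ arising from the uniform density. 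This reduces the integral to $\tfrac{1}{2b}\int_{|z|/b}^\infty e^{-u}\,du$, where the lower limit comes from the indicator $|z|<bu$.

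Evaluating this elementary integral gives $\tfrac{1}{2b}e^{-|z|/b}$, which is exactly the pdf of $\mathrm{Lap}(0,b)$, completing the proof. There is no serious obstacle here: the calculation is a one-line exercise once the cancellation is noted. The only thing to be mindful of is the support condition on $\tilde U$ and the symmetric structure in $z$, so that the integration limits are handled correctly for both signs of $z$; using $|z|$ from the start takes care of this uniformly. Note that this is entirely analogous to the Gaussian case, where the $\chi^2_{n+2}$ latent variable plays the role that $\mathrm{Gamma}(2,1)$ plays here, reflecting that both are instances of the general scale-mixture representation underlying the layered construction of LRSUQ.
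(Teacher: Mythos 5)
Your computation is correct: with the $\mathrm{Gamma}(2,1)$ density $f_{\tilde U}(u)=ue^{-u}\mathbf{1}\{u>0\}$, the factor $u$ cancels the $1/(2bu)$ from the uniform density, and $\frac{1}{2b}\int_{|z|/b}^{\infty}e^{-u}\,du=\frac{1}{2b}e^{-|z|/b}$ is exactly the $\mathrm{Lap}(0,b)$ density. The paper states this lemma as a known fact without proof (in contrast to its Gaussian analogue, which is attributed to the literature), so your one-line marginalization is precisely the verification the paper leaves implicit, and your remark connecting it to the superlevel-set parametrization of the layered construction is consistent with how the lemma is used.
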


By setting $\tilde{U}_{t+\tau,j}^k \sim g = \mathrm{Gamma}(2,1)$ for every $k$ and $\tilde{Z}_{t+\tau,j}^k\;|\;\{\tilde{U}_{t+\tau,j}^k=u\} \sim \mathrm{Unif}((-bu , bu))$ for $k$ and $j$, Lemma~\ref{lem:LRSUQ_error} and Lemma~\ref{lem:mixtLap} imply that the resulting mechanism is Laplace.

The remaining proof is similar to that of Theorem~\ref{thm:Gaussian_Priv}, and the details are omitted. 
Note that the $\ell_1$-sensitivity $\Delta_1$ is at most $2\tau\gamma$.

\end{document}